\documentclass[10pt]{article}

\usepackage[a4paper, total={6in,8in}]{geometry}
\usepackage{framed,multirow}
%---------------------------------------
\usepackage{float}
\usepackage{latexsym}
\usepackage{amsmath,amssymb,amsfonts,graphicx}
\usepackage{enumitem}
%---------------------------------------
\usepackage{url}
\usepackage{xcolor}
\definecolor{newcolor}{rgb}{.8,.349,.1}
%---------------------------------------
\usepackage{lmodern}
\usepackage[scaled]{helvet}

%---------------------------------------
\usepackage{tikz}
\usepackage{tikz-3dplot}
\tdplotsetmaincoords{70}{110}
%---------------------------------------
\usepackage{multirow}
\usepackage{booktabs}
\usepackage{bigstrut}
\usepackage{array}
\newcolumntype{L}[1]{>{\raggedright\let\newline\\\arraybackslash\hspace{0pt}}m{#1}}
\newcolumntype{C}[1]{>{\centering\let\newline\\\arraybackslash\hspace{0pt}}m{#1}}
\newcolumntype{R}[1]{>{\raggedleft\let\newline\\\arraybackslash\hspace{0pt}}m{#1}}
%---------------------------------------
\usepackage{amsthm}
\newtheorem{theorem}{Theorem}
\newtheorem*{remark}{Remark}
%---------------------------------------
\usepackage{algorithm}
\usepackage{algorithmic}
%---------------------------------------
\tolerance=1
\emergencystretch=\maxdimen
\hyphenpenalty=10000
\hbadness=10000
%--------------------------------------- 

%-----

  % symmetric matrices

\newcommand{\prox}{\mathbf{prox}}

 % convex hull

\newcommand{\argmin}{\mathop{\rm argmin}}

 % epigraph

 % domain

\newcommand{\eg}{{\it e.g.}}
\newcommand{\ie}{{\it i.e.}}

\newcommand{\vct}[1]{\boldsymbol{#1}}
%   matrices
\newcommand{\mtx}[1]{\boldsymbol{#1}}
%   block vector

%   block matrix

%	inner products
\newcommand{\<}{\langle}
\renewcommand{\>}{\rangle}

\newcommand{\ve}{\vct{e}}

\newcommand{\vr}{\vct{r}}
\newcommand{\vs}{\vct{s}}

\newcommand{\vu}{\vct{u}}

\newcommand{\vw}{\vct{w}}
\newcommand{\vx}{\vct{x}}
\newcommand{\vy}{\vct{y}}
\newcommand{\vz}{\vct{z}}

\newcommand{\vtheta}{\vct{\theta}}
\newcommand{\vvarphi}{\vct{\varphi}}

\newcommand{\vzero}{\vct{0}}
\newcommand{\vone}{\vct{1}}

\newcommand{\R}{\mathbb{R}}

%------------------------------------------------

\newcommand{\mA}{\mtx{A}}

\newcommand{\mR}{\mtx{R}}

\newcommand{\mPsi}{\mtx{\Psi}}

\newcommand{\norm}[1]{\left\lVert#1\right\rVert}
%-----

\title{\bf CoShaRP: A Convex Program for Single-shot Tomographic Shape Sensing \footnote{The paper is under consideration at Pattern Recognition Letters}}

\author{Ajinkya Kadu$^\dagger$, Tristan van Leeuwen$^{\dagger \ddagger}$, K. Joost Batenburg$^{\dagger \mathsection}$  \\[2ex]
{\normalsize $^\dagger$ Computational Imaging, Centrum Wiskunde \& Informatica, Amsterdam, The Netherlands} \\
{\normalsize $^\ddagger$ Mathematical Institute, Utrecht University, Utrecht, The Netherlands} \\
{\normalsize $^\mathsection$ Leiden Institute of Advanced Computer Science, Leiden University, Leiden, The Netherlands}
}

\begin{document}

\maketitle

%%%%%%%%% ABSTRACT %%%%%%%%%
\begin{abstract}
We introduce single-shot X-ray tomography that aims to estimate the target image from a single cone-beam projection measurement. This linear inverse problem is extremely under-determined since the measurements are far fewer than the number of unknowns. Moreover, it is more challenging than conventional tomography where a sufficiently large number of projection angles forms the measurements, allowing for a simple inversion process. However, single-shot tomography becomes less severe if the target image is only composed of known shapes. Hence, the shape prior transforms a linear ill-posed image estimation problem to a non-linear problem of estimating the roto-translations of the shapes. In this paper, we circumvent the non-linearity by using a dictionary of possible roto-translations of the shapes. We  propose a convex program CoShaRP to recover the dictionary-coefficients successfully. CoShaRP relies on simplex-type constraint and can be solved quickly using a primal-dual algorithm. The numerical experiments show that CoShaRP recovers shapes stably from moderately noisy measurements. 

The code is available at \url{https://github.com/ajinkyakadu125/CoShaRP}.
\end{abstract}

%\linenumbers

%%%%%%%%% Introduction %%%%%%%%%
\section{Introduction}
\label{sec:intro}
In tomographic imaging, the aim is to characterize the three-dimensional structure of an object from X-ray projections. In applications like medical CT, projections are gathered from all directions. This allows for a relatively straight-forward reconstruction of the object using so-called filtered back-projection methods (FBP)\cite{defrise1994cone, Kudo1998, Zou2004}. When only a limited angular sampling is available, more advanced iterative reconstruction techniques that use prior information about the structure of the object have been developed \cite{delaney1998globally,Persson2001,Sidky2008,batenburg2011dart,Frikel2013}. In this paper, we consider an extreme case where we acquire only a single X-ray projection on which to base a full three-dimensional reconstruction. We refer to this problem as \textit{single-shot X-ray tomography}. This problem is highly relevant for many applications, including industrial quality control and high-throughput imaging \cite{Hantke2014,Pelt2016,Laanait2017,Bicer2017}. A typical setup consists of a fixed X-ray source and detector that collects single X-ray projections of the objects of interest. While we specifically target reconstruction from single-angle X-ray projections, the techniques we develop are also relevant for limited-angle tomography with applications including high-resolution dynamic imaging\cite{Spence2012,Zhu2020} and cryo-electron microscopy \cite{Zhang2008,Scheres2012}.

%---------------------------------------
\begin{figure}[!htb]
    \centering
    \begin{tabular}{c c c}
        (a) True & (b) Projection & (c) FBP \\
        \includegraphics[width=0.2\columnwidth]{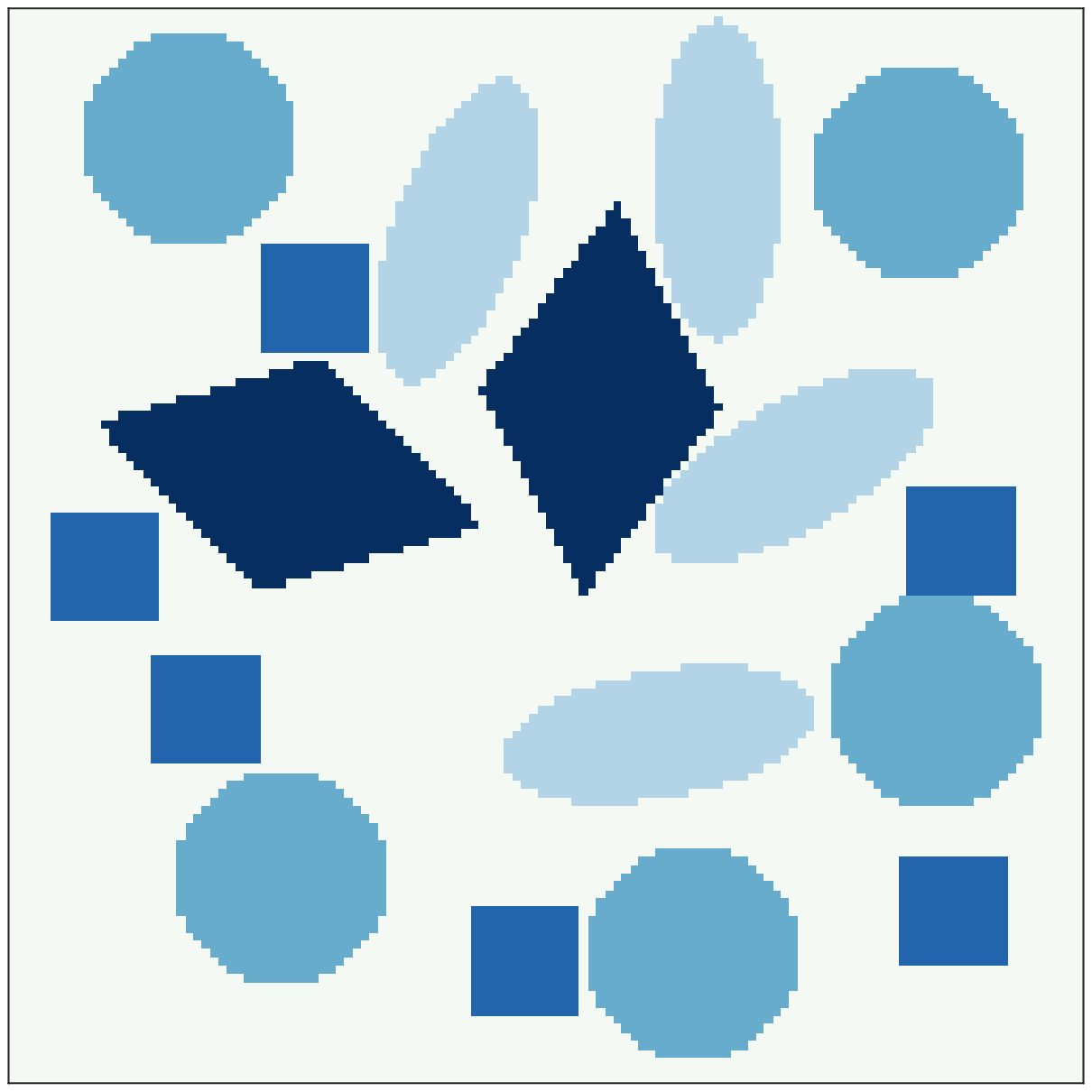} & \includegraphics[width=0.2\columnwidth]{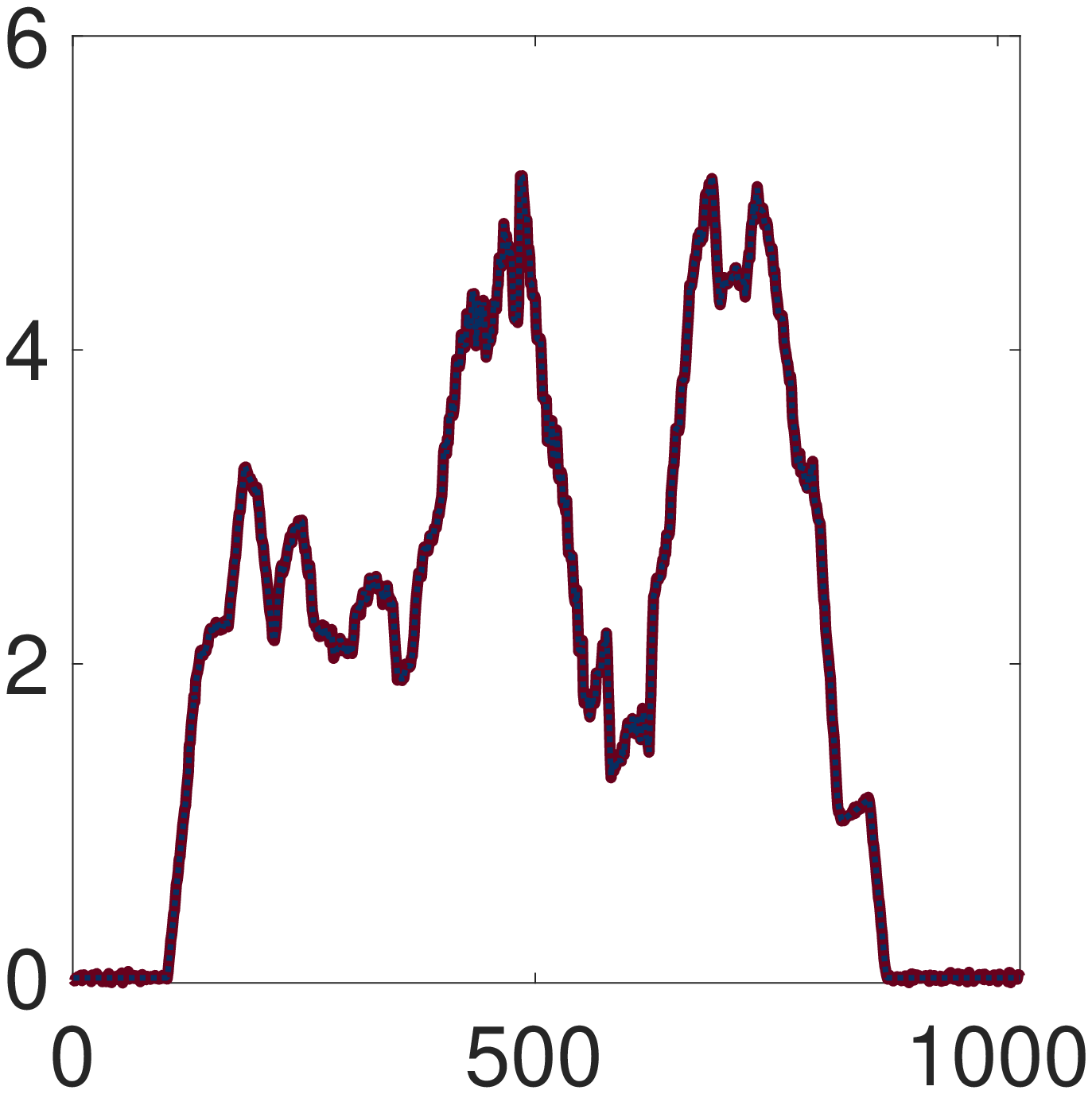} & \includegraphics[width=0.2\columnwidth]{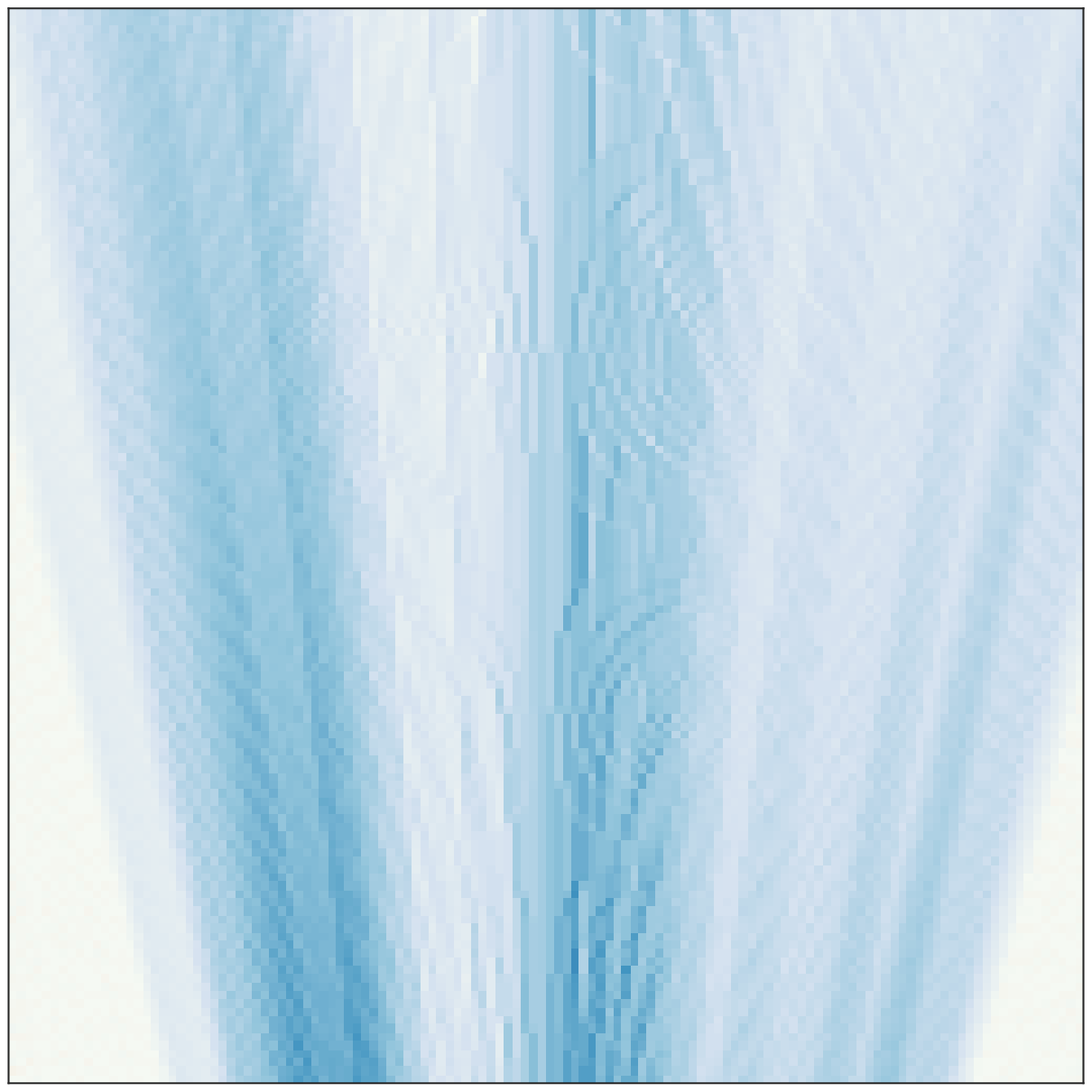} \\[1ex]
        (d) TV & (e) SSC & (f) CoShaRP \\
        \includegraphics[width=0.2\columnwidth]{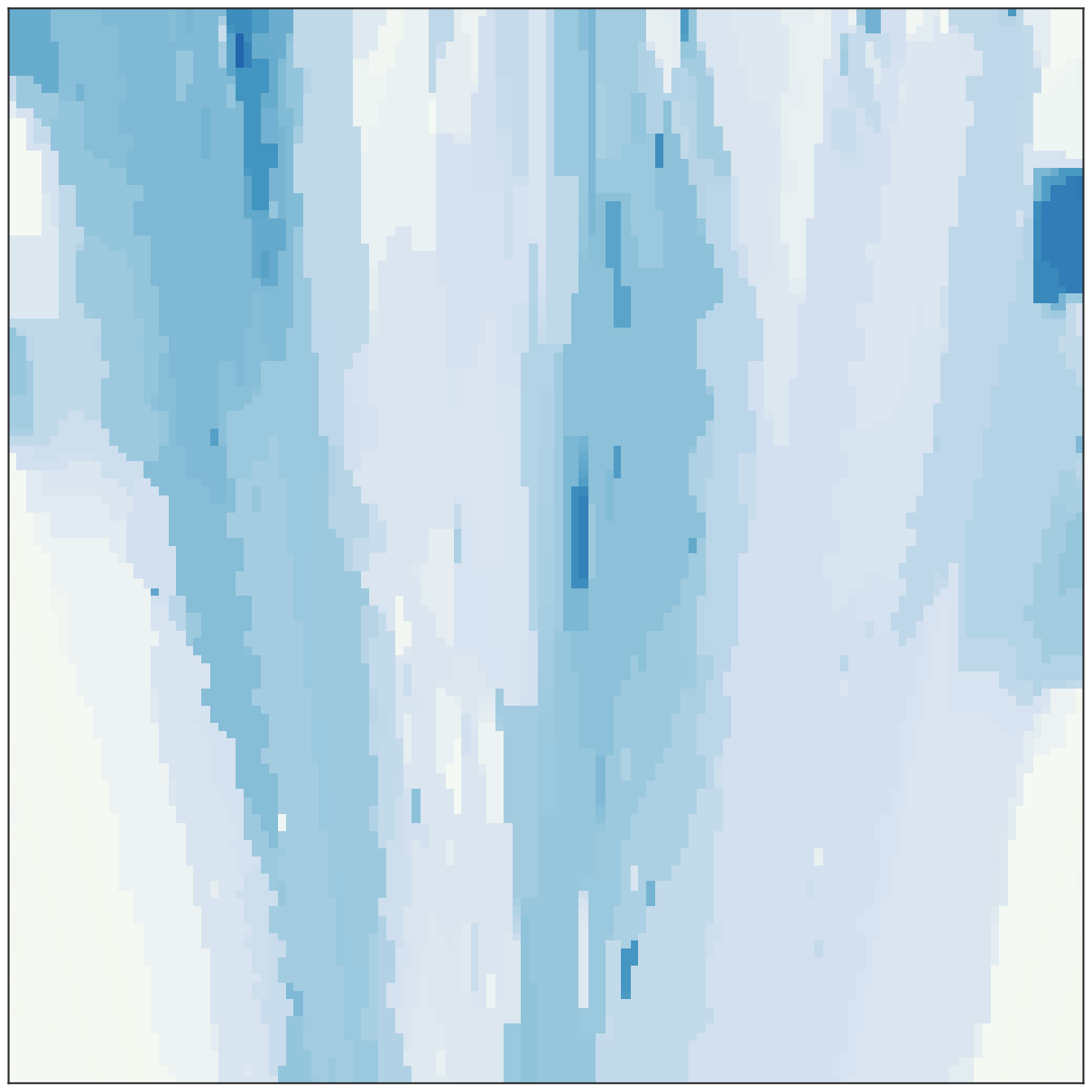} & \includegraphics[width=0.2\columnwidth]{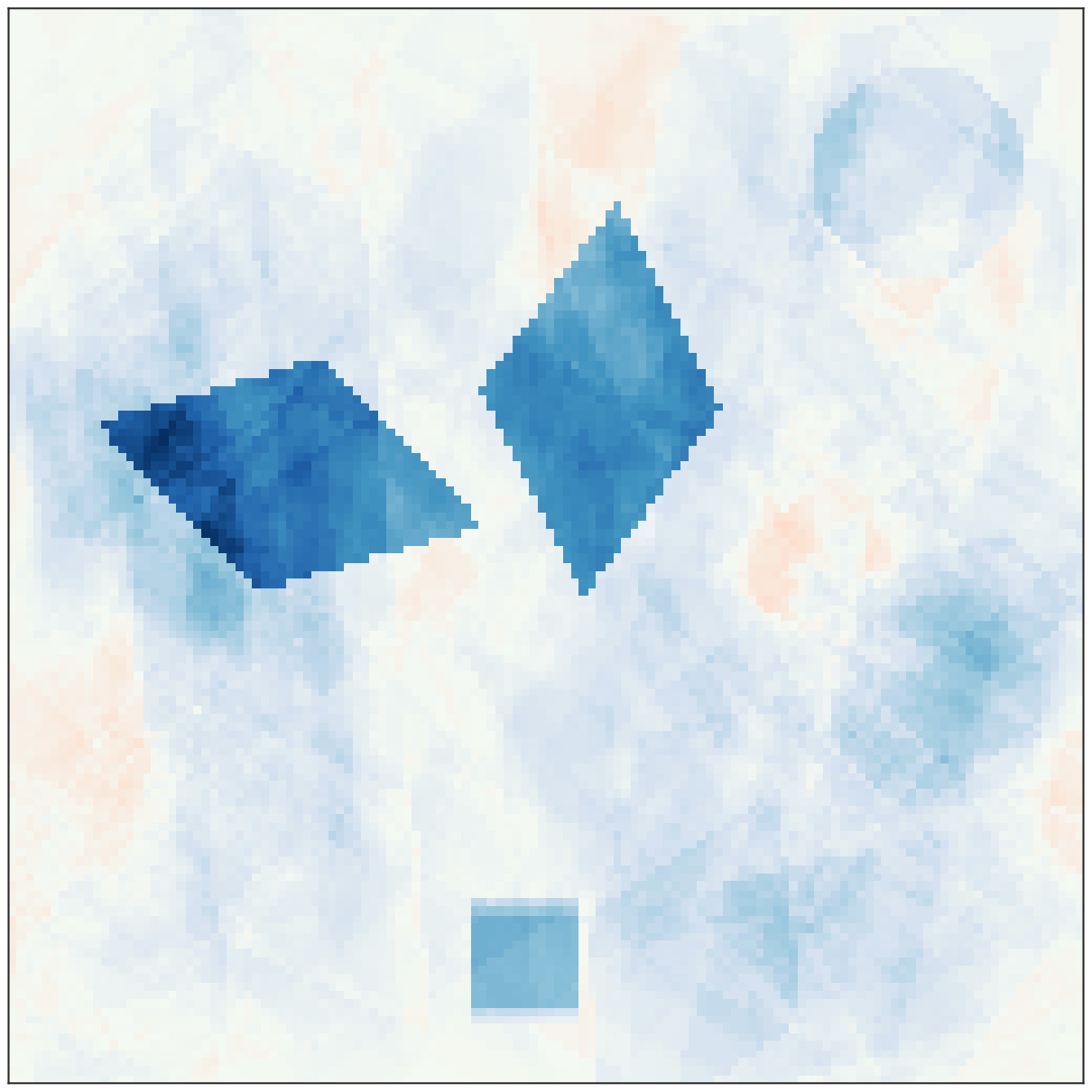} &  \includegraphics[width=0.2\columnwidth]{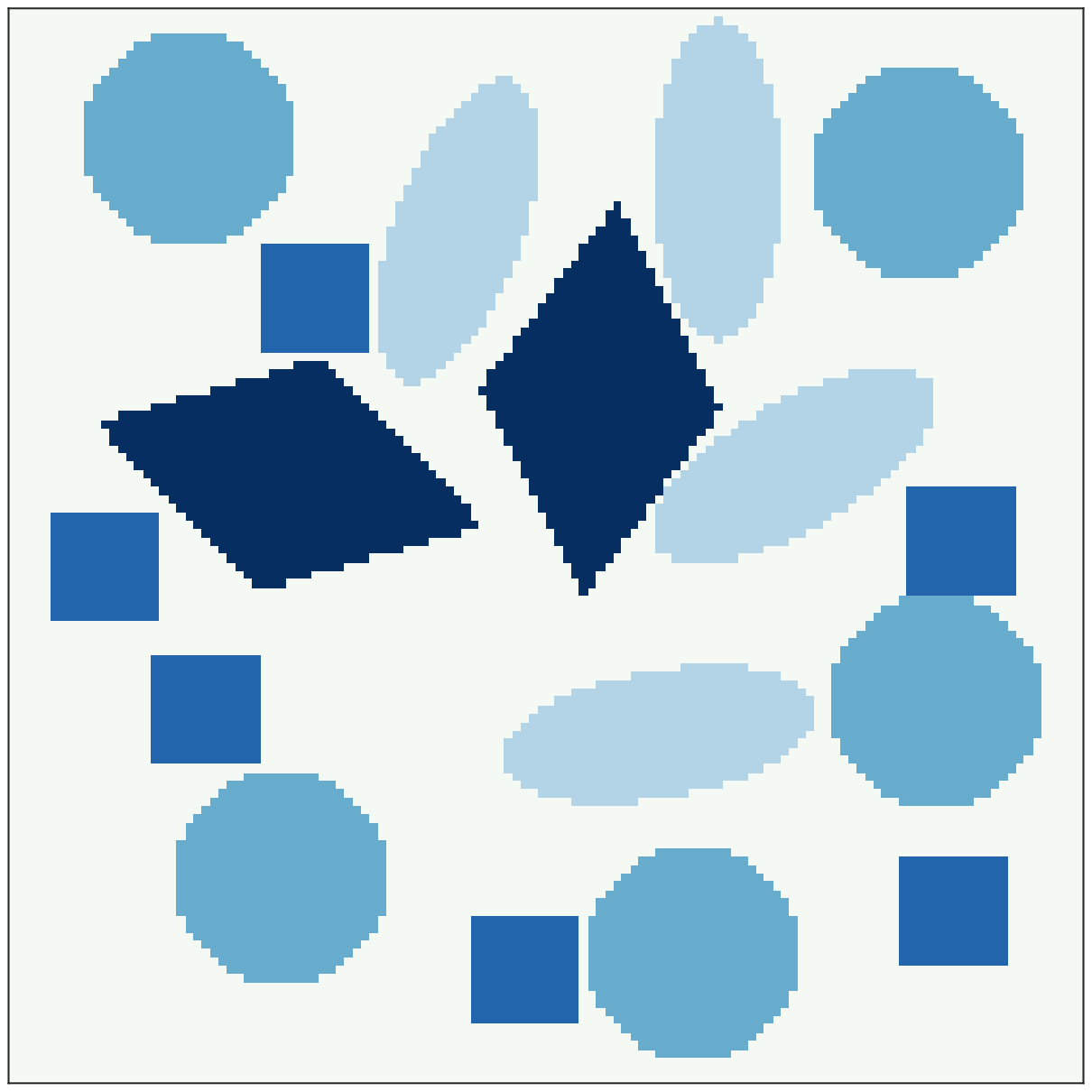} \\
        & \includegraphics[width=0.2\columnwidth]{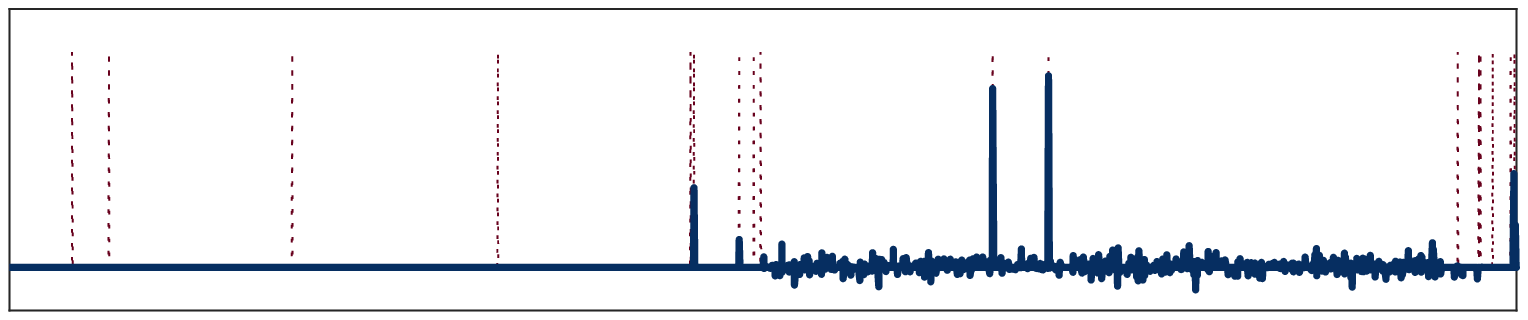} & \includegraphics[width=0.2\columnwidth]{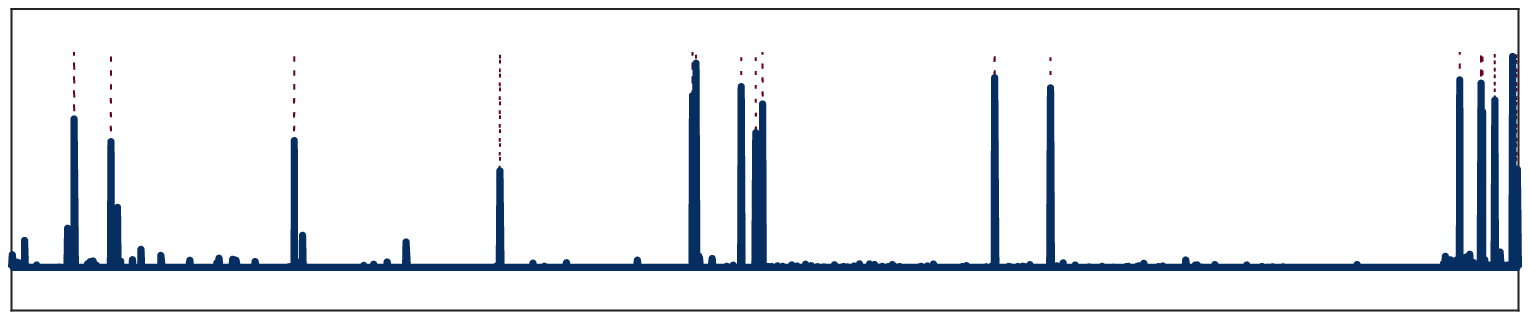}
    \end{tabular}
    \caption{Demonstration of single-shot X-ray tomography. A target image (a) of size $1~m \times 1~m$ discretized on $128 \times 128$ pixel grid has a fan-beam projection shown in (c). The image consists of 4 different shapes with different intensities. The equally-spaced detectors place at the top of the image collect a total of $1024$ measurements resulting from an X-ray source at the bottom of the image. We consider FBP (c), Total Variation (TV) regularization (d), and Sparse Shape Composition (e). Our proposed approach CoShaRP is given in (f). The shape coefficients for SSC and CoShaRP are given below their figures, while dotted ones denote the correct coefficients. \vspace{-2mm}}
    \label{fig:intro}
\end{figure}
%---------------------------------------

The single-shot X-ray tomography problem is extremely under-determined, making it an ill-posed inverse problem. A single cone-beam projection of a volume containing $n^3$ voxels consists of $\mathcal{O}(n^2)$ measurements. Hence, the measurements are undersampled by a factor of $n$. To reduce the ill-posedness, it is a general practice to incorporate prior information via regularization. However, classic regularization methods fail on single-shot X-ray tomography, as demonstrated in Fig.~\ref{fig:intro}.

Since it is evident that a strong prior is necessary to recover the target image from a single projection, we consider the class of objects that are composed of a limited number of known building blocks. This is a reasonable assumption when imaging materials that are made up of basic structures, for example, a 3D structure comprising of interlocking bricks or a protein structure consisting of repeated amino acid groups. Hence, if such shapes are known a-priori along with their number of repetitions, the image estimation problem can be recast as an estimation of roto-translation parameters of these shapes. We term the process of estimating the shape parameters (\ie, their roto-translation in the image) from the linear measurements of the image that constitutes them as \textit{shape sensing}. Unlike the image estimation problem -- which has a simple linear structure -- the estimation of roto-translation parameters (\ie, shape sensing) is a non-linear problem. This, in turn, makes the inversion process computationally intractable. To avoid such non-linearity, we use a shape dictionary approach that expresses the target image as a linear combination of shapes from the available dictionary. Due to the linear structure, we show that it is possible to recover the shapes from a single-shot X-ray projection (see Fig.~\ref{fig:intro}) by solving a convex problem. We regard the recovery of shapes from their tomographic projections as \emph{tomographic shape sensing}.

\subsection{Related Work}

The shape sensing problem has been studied in the context of shape-based characterization, object tracking and optical character recognition. Inspired by the compressed sensing, a recent approach called Sparse Shape Composition (SSC) imposes an $\ell_1$-norm constraint on the shape-dictionary coefficients  \cite{Aghasi2013, Aghasi2015}. SSC has the advantage that it can form new shapes from the intersection and union of basic shapes. However, the main drawback comes from large feasible solution space inherent to the $\ell_1$-norm constraint in high dimension (see Remark~\ref{remark:CoShaRPvsSSC}). This large feasible space may lead to an incorrect solution. A simplistic version of SSC performs the 3D characterization of nano-particles using electron tomography \cite{Zanaga2016}. This method uses a simple $\ell_1$-norm constraint to recover spherical nano-particles from their tomographic projections. However, their tomographic projections have a parallel-beam geometry and require measurements from more than one projection angle. Although SSC has been extended to tomographic inverse problem, it has never been tested for single-shot tomographic shape sensing. However, we demonstrate the failure of SSC in single-shot cone-beam tomography in Fig.~\ref{fig:intro}.

\subsection{Contributions and Outline} 
To the best of our knowledge, the single-shot X-ray tomography has never been studied, and no reconstruction method exist till date to recover back an image successfully from a single-shot. We introduce the tomographic shape sensing problem that assumes the prior knowledge about the shapes in the image. The principal contribution of this paper is the development of the convex program CoShaRP to reconstruct images composed of non-overlapping shapes from a single-shot. The convex program is novel in the sense that the simplex-type constraint enables sharp recovery results from extremely under-determined single-shot tomographic projections. Although the exact recovery problem is NP-hard, our proposed convex program CoShaRP stably recovers the shapes. Moreover, we propose a primal-dual algorithm to find the optimal solution of CoShaRP. The algorithm does not rely on the inversion of large matrices and uses a simple proximal operator to project onto a $K-$simplex constraint. Using numerical experiments, we answer the following questions:
\begin{enumerate}[topsep=5pt,itemsep=0ex,partopsep=1ex,parsep=1ex]
    \item What is the minimum resolution of the shape that CoShaRP can sense?
    \item Is CoShaRP robust to the number of repetitions and the possible rotations of the shapes in the target image?
    \item Can CoShaRP recover non-homogeneous as well as non-convex shapes?
    \item How sensitive is CoShaRP to the measurement noise?
\end{enumerate}
We discuss the single-shot tomographic inverse problem in Section~\ref{sec:tomography}. Section~\ref{sec:CSR} discusses the tomographic shape sensing problem and introduces a convex program CoShaRP. We describe an efficient iterative scheme to find an optimal solution to CoShaRP in Section~\ref{sec:Opt}. We illustrate the numerical experiments in Section~\ref{sec:Results} and conclude the paper in Section~\ref{sec:Conclusion}. 
% To solve the tomographic shape sensing problem, we utilize a dictionary approach to include prior information about the shapes. This approach allows us to represent shapes in the image as a convex combination of elements in the dictionary. We enforce the shape coefficients to lie on $K$-simplex, where $K$ is the total number of shapes in the target image. The resulting tomographic inverse problem is a convex program, referred as a Convex Shape Recovery Program (CoShaRP).

\subsection{Notation}
Throughout this paper, small boldface letters (\eg, $\vx, \vz$) denote vectors in $\R^n$. The identity and zero elements are denoted by $\vzero$ and $\vone$ respectively. The Euclidean inner product is denoted by $\< \vx, \vy \> = \sum\nolimits_{i=1}^{n} x_i y_i$ for $\vx, \vy \in \R^n$ with a corresponding norm $\| \vx \| = \sqrt{\< \vx, \vx \>}$. However, for all other norms, we use subscripts (\eg, $\| \vx \|_1 = \sum_{i=1}^n |x_i|$, $\|\vx \|_{\infty} = \max_i |x_i|$). To represent the matrices, we use uppercase letters (\eg, $\mA, \mPsi$). The elements of a matrix $\mA$ are denoted by $a_{ij}$. All the functions are represented as $f: \mathcal{X} \mapsto \mathcal{Y}$, where $\mathcal{X}$ and $\mathcal{Y}$ are the domain and co-domain of $f$, respectively. We denote the convex conjugate of the function $f$ by $f^\star$. $\prox_{f}( \vz)$ denotes the proximal of function $f$ evaluated at point $\vz$  (for definition, please refer to \cite{Combettes2011}). We represent the optimal solution to the optimization problem using overline (\eg, $\overline{\vx}, \overline{{\mu}}$).

%%%%%%%%% Single-shot tomography %%%%%%%%%

\section{Single-shot X-ray Tomography}
\label{sec:tomography}

The acquisition geometry of single-shot X-ray tomography consists of one source and an array of regularly spaced detectors. Let $\vvarphi \in \mathbb{S}^{d-1}$ be a directional vector, and $\vr \in \R^d$ be any position vector, where $d \in \{2,3\}$ is the dimension of the scene. The cone-beam transform $A_C$ of an image function $x: \R^d \mapsto \R$ is its integral along a line in the direction $\vvarphi$ passing through $\vr$. It is mathematically given by
\begin{equation*}
    (A_C x)(\vr,\vvarphi) = \int_{0}^{\infty} x(\vr + t \vvarphi) \, \mathrm{d} t.
\end{equation*}
In a single-shot setup, we have a source located at $\vr_0$. It sends multiple X-rays through the object (compactly supported on $\Omega \subset \R^d$) in a cone with a vertex at $\vr_0$ and spanning angles in a set $\Phi$ that determines the geometry of cone. Let these angles be $\vvarphi_i \in \Phi$, $i=1, \dots, m$, then the measurement $y_i$ is
\begin{align*}
    y_i &= \left(A_C x \right) \left(\vr_0,\vvarphi_i \right) 
    \approx \sum_{j=1}^{n} a_{ij} x_j,
\end{align*}
where $x_j$ denotes the value of $x$ in the $j^\text{th}$ voxel and $a_{ij}$ is the contribution of the $j^\text{th}$ voxel to the $i^\text{th}$ ray.
%where we have approximated the integral with $a_{ij}$ denoting %the quadrature weights. We note that these coefficients are %always between 0 and 1. 
%By choosing the right set of $t_{ij}$, 
The measurements can now be expressed as a linear system of equations
\[
\vy = \mA \vx.
\]
The above linear system of equations is extremely under-determined since the number of measurements $m$ is far smaller than the number of unknowns $n$. We do assume that each voxel is intersected by at least one ray, so that each column of the matrix has at least one non-zero element. Determining the image from the measurements is an ill-posed inverse problem. In general, to resolve this ill-posedness, regularization needs to be added in the inversion procedure to incorporate the prior knowledge about the target image. However, conventional regularization techniques are not sufficient to resolve the true image, as was illustrated in Fig.~\ref{fig:intro}.

%%%%%%%%% Convex Shape Recovery %%%%%%%%%

\section{Convex Shape Recovery}
\label{sec:CSR}

In this section, we discuss the tomographic shape sensing problem. Our formulation hinges on the formation of a dictionary that consists of possible roto-translations of the known shapes and the representation of target image as a convex combination of dictionary elements. % Although the exact recovery problem is NP-hard, our proposed convex program CoShaRP stably recovers the shapes.

\subsection{Image model and Dictionary}
Let the functions $u_i: \Omega \mapsto \R$, $i=1, \dots, S$, denote the compactly supported shape functions. The image is now assumed to be composed of roto-translations of these shapes
\[
    x(\vr) = \sum_{i=1}^{S} \sum_{j=1}^{k_i} u_i \left( \mR\left(\vtheta_{i,j}\right) \vr + \vs_{i,j} \right)
\]
where $\vtheta_{i,j} \in \R^{d(d-1)/2}$ and $\vs_{i,j} \in \R^d$ are the angle and the shift of $j^{\text{th}}$ copy of shape $i$ respectively, and $\mR \in \R^{d \times d}$ is a rotation matrix that depends on the angle $\vtheta$. The total number of shapes in an image are $K = k_1 + \dots + k_S$. % We  assume that the image is formed from the shapes such that there is no overlap between the shapes. That is, the image consists of disjoint shapes. 
Hence, from the knowledge of the shapes, the image estimation translates to finding the roto-translation parameters $(\vtheta, \vs)$ of the shapes. However, the image is a non-linear function of these parameters. Hence, the recovery of these parameters becomes a computationally intractable problem due to the non-convex structure of the cost function.

% \begin{remark}[Nonlinearity w.r.t. roto-translation parameters]
% avc
% \end{remark}

To mitigate the non-linearity associated with the roto-translation parameters, we create a \textit{shape} dictionary that consists of roto-translations of the shapes. Let the dictionary 
\begin{align*}
    \mPsi (\vr) &= \left[ \widehat{\mPsi}_1 (\vr), \dots, \widehat{\mPsi}_S (\vr) \right], \\[1ex]
    \mbox{with} \quad \widehat{\mPsi}_i (\vr) &= \big[ u_i \left(  \mR \left( \vtheta_{i,j} \right) \vr  + \vs_{i,j} \right) \big]_{j=1}^{J}, \qquad i = 1, \dots, S,
\end{align*}
where $j=1, \dots, J$ covers possible roto-translations of the shapes. Hence, the target image can be represented as a linear combination of the elements of this dictionary,
\begin{align*}
    x(\vr) &= \sum\nolimits_{i=1}^{p} z_i \psi_i (\mathbf{r}) \\[1ex]
    \mbox{with} & \quad  z_i \in \{0, 1 \}, \quad i=1, \dots, p,
\end{align*}
where $\vz = \left[ z_1, \dots, z_p \right]^T$ is a coefficient vector, with $p = JS$. Hence, the shape recovery problem is to find a high-dimensional binary vector $\vz$ from its linear measurements
\[
\vy = \mA \mPsi \vz.
\]
Here, $\mA \mPsi$ contains the projections of the individual dictionary elements, sampled at the appropriate points. The binary constraints on $\vz$ make the recovery problem an integer program, and hence, NP-hard in general \cite{Karp1972}.

\subsection{Convex Shape Recovery Program (CoShaRP)}
The binary constraints on the coefficients can be relaxed using the bounds constraints. Moreover, a Gaussian assumption on the noise leads to a least-squares formulation for the data misfit. Hence, the resulting convex program, which we refer to as the \emph{Convex Shape Recovery Program} (CoShaRP), reads
\begin{equation}
    \begin{split}
        \underset{\vz \in \R^p}{\mbox{minimize}} & \quad \norm{ \mA \mPsi \vz - \vy } \\
        \mbox{subject to} & \quad  \vz^T \vone = K, \quad \vzero \leq \vz \leq \vone.
    \end{split}
    \label{eq:CoShaRP}
\end{equation}
Here, the inequality between vectors is imposed elementwise. Note that we have used the Euclidean norm instead of its square to measure the misfit. 

The geometric interpretation of CoShaRP is as follows: We are trying to find a high-dimensional vector $\vz$ closest to the hyperplane $\mA \mPsi \vz = \vy$ in a Euclidean sense that lies on the intersection of a hyperplane $\vz^T \vone = K$ and the hyperbox $\vzero \leq \vz \leq \vone$. In Fig.~\ref{fig:geometry}, we show the geometry for a shape-sensing problem with two possible shapes ($p=2$). % problem. 
In Fig.~\ref{fig:geometry}(a), the hyperplane corresponding to tomographic measurements intersects the hyperplane corresponds to equality constraints ($\vz^T \vone = K$) outside the hyperbox. Hence, the solution to CoShaRP in this case is binary. However, a binary solution can not always be guaranteed as these hyperplanes may intersect inside the hyperbox (cf. Fig.~\ref{fig:geometry}(b)). In such cases, further post-processing % proper image formation process 
is required to retrieve the target image. For more, refer to Section~\ref{sec:Opt:ImForm}.

% K-simplex
The CoShaRP consists of constraints that are defined by K-simplex. The K-simplex, defined as
\[
    \Delta_{p}^K = \left\lbrace \vz \in \R^p \, | \, \sum\nolimits_{i=1}^{p} z_i = K, \, \vzero \leq \vz \leq \vone \right\rbrace ,
\]
is a generalized version simplex (simplex has $K=1$). K-simplex represents a polytope in $p$-dimension with $\binom{p}{k}$ vertices. Moreover, these polytopes are \textit{regular}, \ie, they posses highest level of symmetry. We plot $K$-simplex in three dimension in Fig.~\ref{fig:3DSimplex}. These simplices are equilateral triangle except for $K=3$. However, it is important to note that the number of shapes in the target image will be much smaller than the number of dictionary elements (\ie, $K \ll p$). Hence, we will frequently encounter feasible regions to be extremely low-dimensional polytope embedded in a high-dimensional space.

\begin{figure}[t]
    \centering
    \begin{tabular}{c c}
        (a)  & (b) \\
        \resizebox{0.45\columnwidth}{!}{
        \begin{tikzpicture}[scale=0.6]
        \def\laxis{5}
        \def\ltr{3}
        \begin{scope}[->,black]
            \draw (0,0) -- (\laxis,0) node [below] {$z_1$};
            \draw (0,0) -- (0,\laxis) node [right] {$z_2$};
        \end{scope}
        \node [] at (\ltr,-0.5) {1};
        \node [] at (-0.5,\ltr) {1};
        \filldraw [opacity=0.25,gray] (0,0) -- (0,\ltr) -- (\ltr,\ltr) -- (\ltr,0) -- cycle;
        \draw[black] (-0.5*\ltr,1.5*\ltr) -- (1.5*\ltr,-0.5*\ltr) node [anchor=south east] {$\vz^T \vone = K$};
        \draw[opacity=0.5,red,very thick] (0,\ltr) -- (\ltr,0);
        \draw[thick,dashed] (-0.3*\ltr,1.6*\ltr) -- (1.5*\ltr,0.2*\ltr) node [anchor=south west] {$\mA \mPsi \vz = \vy$};
        \node [opacity=0.5,green!40!black,thick] at (0,\ltr) {$\bigstar$};
        \end{tikzpicture} 
        }
        &
        \resizebox{0.45\columnwidth}{!}{
        \begin{tikzpicture}[scale=0.6]
        \def\laxis{5}
        \def\ltr{3}
        \begin{scope}[->,black]
            \draw (0,0) -- (\laxis,0) node [below] {$z_1$};
            \draw (0,0) -- (0,\laxis) node [right] {$z_2$};
        \end{scope}
        \node [] at (\ltr,-0.5) {1};
        \node [] at (-0.5,\ltr) {1};
        \filldraw [opacity=0.25,gray] (0,0) -- (0,\ltr) -- (\ltr,\ltr) -- (\ltr,0) -- cycle;
        \draw[black] (-0.5*\ltr,1.5*\ltr) -- (1.5*\ltr,-0.5*\ltr) node [anchor=south west] {$\vz^T \vone = K$};
        \draw[opacity=0.5,red,very thick] (0,\ltr) -- (\ltr,0);
        \draw[thick,dashed] (0.5*\ltr,-0.3*\ltr) -- (0.2*\ltr,1.5*\ltr) node [anchor=north west] {$\mA \mPsi \vz = \vy$};
        \node [opacity=0.5,green!40!black,thick] at (0.35*\ltr,0.65*\ltr) {$\bigstar$};
        \end{tikzpicture} 
        }
    \end{tabular}
    \caption{Geometry of CoShaRP. The grey region denotes the hyperbox that corresponds to bounds constraints($\vzero \leq \vz \leq \vone$). Solid line denotes the hyperplane for constraint on the number of shapes in the image, while dotted line denotes the hyperplane for measurement $\mA \mPsi \vz = \vy$. Note that the measurement hyperplanes do not pass through point $(0,1)$ or $(1,0)$ due to noise in the measurements. The star denotes the solution of CoShaRP. The figure (a) denotes the setup where coefficient is binary, while (b) with a non-binary solution.} \vspace{-4mm}
    \label{fig:geometry}
\end{figure}
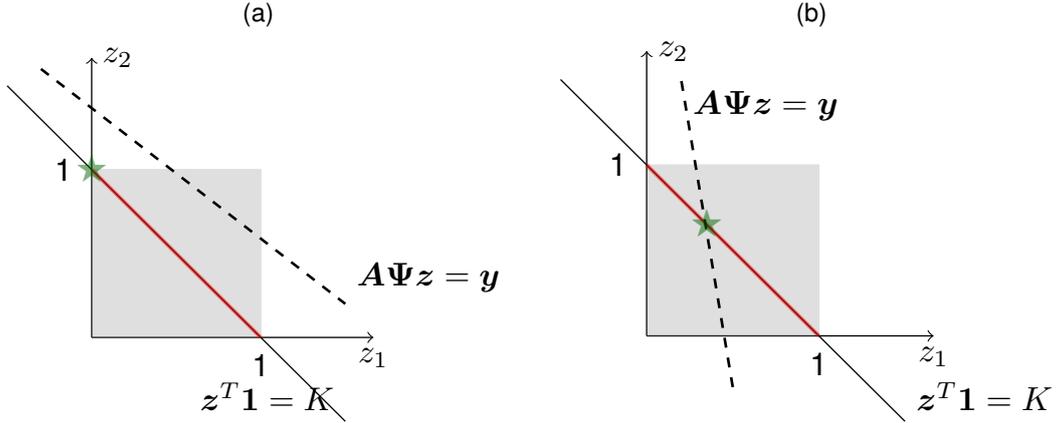

\begin{figure}[!b]
    \centering
    \begin{tabular}{c c c}
        (a) 1-simplex & (b) 2-simplex & (c) 3-simplex \\
        \resizebox{0.3\columnwidth}{!}{
        \begin{tikzpicture}[tdplot_main_coords,scale=2]
        \def\laxis{4}
        \def\ltr{3}
        \begin{scope}[->,black]
            \draw (0,0,0) -- (\laxis,0,0); %  node [below] {$x$};
            \draw (0,0,0) -- (0,\laxis,0); %  node [right] {$y$};
            \draw (0,0,0) -- (0,0,\laxis); %  node [left] {$z$};
        \end{scope}
        \filldraw [opacity=0.5,red] (\ltr,0,0) -- (0,\ltr,0) -- (0,0,\ltr) -- cycle;
        \filldraw [opacity=.2,gray] (\ltr,0,0) -- (\ltr,0,\ltr) -- (\ltr,\ltr,\ltr) -- (\ltr,\ltr,0) -- cycle;
        \filldraw [opacity=.2,gray] (0,\ltr,0) -- (\ltr,\ltr,0) -- (\ltr,\ltr,\ltr) -- (0,\ltr,\ltr) -- cycle;
        \filldraw [opacity=.2,gray] (0,0,\ltr) -- (\ltr,0,\ltr) -- (\ltr,\ltr,\ltr) -- (0,\ltr,\ltr) -- cycle;
        \end{tikzpicture} 
        }
        &
        \resizebox{0.3\columnwidth}{!}{
        \begin{tikzpicture}[tdplot_main_coords,scale=2]
        \def\laxis{4}
        \def\ltr{3}
        \begin{scope}[->,black]
            \draw (0,0,0) -- (\laxis,0,0); %  node [below] {$x$};
            \draw (0,0,0) -- (0,\laxis,0); %  node [right] {$y$};
            \draw (0,0,0) -- (0,0,\laxis); %  node [left] {$z$};
        \end{scope}
        \filldraw [opacity=.5,red] (\ltr,\ltr,0) -- (0,\ltr,\ltr) -- (\ltr,0,\ltr) -- cycle;
        \filldraw [opacity=.2,gray] (\ltr,0,0) -- (\ltr,0,\ltr) -- (\ltr,\ltr,\ltr) -- (\ltr,\ltr,0) -- cycle;
        \filldraw [opacity=.2,gray] (0,\ltr,0) -- (\ltr,\ltr,0) -- (\ltr,\ltr,\ltr) -- (0,\ltr,\ltr) -- cycle;
        \filldraw [opacity=.2,gray] (0,0,\ltr) -- (\ltr,0,\ltr) -- (\ltr,\ltr,\ltr) -- (0,\ltr,\ltr) -- cycle;
        \end{tikzpicture}
        }
        &
        \resizebox{0.3\columnwidth}{!}{
        \begin{tikzpicture}[tdplot_main_coords,scale=2]
        \def\laxis{4}
        \def\ltr{3}
        \begin{scope}[->,black]
            \draw (0,0,0) -- (\laxis,0,0); %  node [below] {$x$};
            \draw (0,0,0) -- (0,\laxis,0); %  node [right] {$y$};
            \draw (0,0,0) -- (0,0,\laxis); %  node [left] {$z$};
        \end{scope}
        \node [opacity=0.5,red,line width=20pt] at (\ltr,\ltr,\ltr) {\scalebox{2}{$\bullet$}};
        \filldraw [opacity=.2,gray] (\ltr,0,0) -- (\ltr,0,\ltr) -- (\ltr,\ltr,\ltr) -- (\ltr,\ltr,0) -- cycle;
        \filldraw [opacity=.2,gray] (0,\ltr,0) -- (\ltr,\ltr,0) -- (\ltr,\ltr,\ltr) -- (0,\ltr,\ltr) -- cycle;
        \filldraw [opacity=.2,gray] (0,0,\ltr) -- (\ltr,0,\ltr) -- (\ltr,\ltr,\ltr) -- (0,\ltr,\ltr) -- cycle;
        \end{tikzpicture}
        }
    \end{tabular}
    \caption{K-simplex in 3D. Barring $K=3$ case where the K-simplex reduces to a point, the simplex are equilateral triangles (denoted by red color). The gray box denotes the bounds constraints.} 
    \label{fig:3DSimplex}
\end{figure}
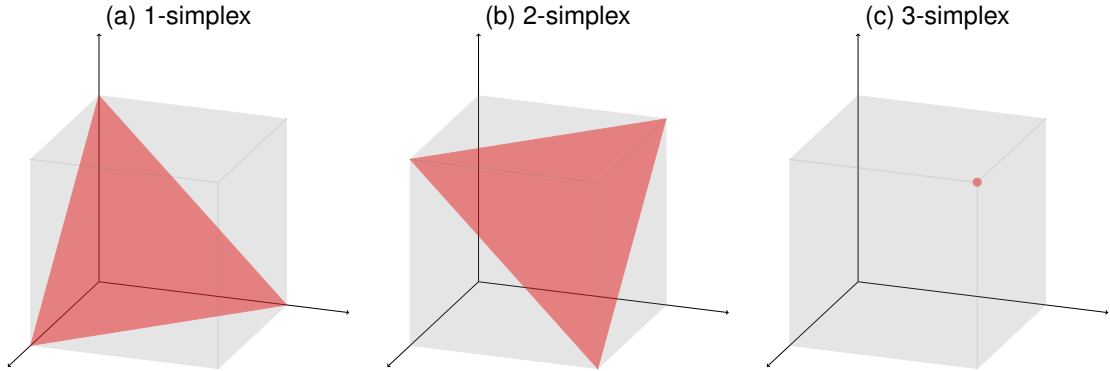

\begin{remark}
\label{remark:CoShaRPvsSSC}
CoShaRP differs significantly from the Sparse Shape Composition (SSC) \cite{Aghasi2013, Zanaga2016}. SSC formulates the shape-sensing problem as
\begin{align}
    \mbox{minimize} \quad \tfrac{1}{2} \norm{ \mA \mPsi \vz - \vy}^2 \quad \mbox{subject to} \quad \norm{ \vz }_1 \leq K.
    \label{eq:SSC}
\end{align}
The $\ell_1$-norm ball is bigger in size than the K-simplex constraint set used in CoShaRP. In particular, the K-simplex constraint represents the strict $\ell_1$ ball, \ie, $\norm{\vz}_1 = K$, in the non-negative region. Since the solution lies on the corners of the K-simplex constraint, the recovery with CoShaRP is stronger than that of SSC (see, \eg, Fig.~\ref{fig:intro}).
\end{remark}

%%%%%%%%% Optimization %%%%%%%%%
\section{Optimization}
\label{sec:Opt}

We discuss a fast iterative scheme to find an approximate solution of CoShaRP numerically. The iterative scheme is based on splitting strategy that separates the non-smooth part from the smooth. We also introduce a thresholding method to recover the image from the coefficient vector, in case the solution is not binary.

\subsection{Primal-Dual Algorithm}
For simplicity, we express CoShaRP in the following form:
\begin{align*}
    \mbox{minimize} \quad & f(\mA \mPsi \vz) + g(\vz), \\[1ex]
    \mbox{where} \quad  f(\vz) &= \norm{\vz - \vy }, \quad g(\vz) = \delta_\mathcal{C}(\vz), \\
    \mbox{with} \quad  \mathcal{C}(\vz) &= \Big\lbrace \vz \in \R^n \, | \, \vz^T \vone = K, \, \vzero \leq \vz \leq \vone \Big\rbrace,
\end{align*}
and $\delta_{\mathcal{C}}$ is the indicator function of the set $\mathcal{C}$. % $\Psi \in \mathbb{R}^{n \times p}$ is a dictionary matrix. Note that $f: \mathbb{R}^n \mapsto \mathbb{R}$ is a smooth function, \ie, at least twice continuously differentiable, while $g: \mathbb{R}^p \mapsto \{0, \infty\}$ is a non-smooth function.
To solve this optimization problem, we use a primal-dual splitting algorithm \cite{Zhang2010, Chambolle2010}. The iterates of this primal-dual algorithm takes the following form:
\begin{align*}
    \vz_{t+1} &= \prox_{\gamma g} \Big( \vz_t - \gamma \mPsi^T \mA^T \vu_t \Big), \\
    \vu_{t+1} &= \prox_{\tau f^\star} \Big( \vu_t - \tau \mA \mPsi \left(\vz_t - 2 \vz_{t+1} \right) \Big),
\end{align*}
for $t = [0,1,\dots,T]$, where $\gamma, \tau > 0$, with $ \gamma \tau \leq \| \mA \mPsi \|^{-1}$, are parameters that controls the speed of convergence. % , and $L$ is the Lipschitz constant of the gradient of $f$ ($L = \|A \|$). 
The main characteristic of this algorithm is that we avoid an inversion of a large matrix $\mPsi$ which often occurs in other splitting methods such as alternating direction method of multipliers \cite{Boyd2010}. Moreover, the proximal of both functions are easy to compute.

\begin{algorithm}[t]
    \caption{Primal-Dual Algorithm for CoShaRP}
    \label{alg:split}
    \begin{algorithmic}[1]
    \renewcommand{\algorithmicrequire}{\textbf{Input:}}
    \renewcommand{\algorithmicensure}{\textbf{Output:}}
    \REQUIRE $\mA \in \R^{m \times n}, \mPsi \in \R^{n \times p}, \vy \in \R^{m}, \gamma, \tau, T, \epsilon$
    \ENSURE  $\overline{\vz}$
 \\ \textit{Initialisation} : $\vz_0, \vu_0$
  \FOR {$t = 0$ to $T$}
  \STATE compute $\vz_{t+1} = \! \prox_{\gamma g} \! \! \left(\vz_t - \gamma \mPsi^T \! \mA^T \! \vu_t \right)$ using eq.~\eqref{eq:OrthProj:Formula}
  \STATE compute $\vu_{t+1} = \prox_{\tau f^\star} \! \! \left(\vu_t - \tau \mA \mPsi (\vz_t - 2\vz_{t+1}) \right)$ using eq.~\eqref{eq:Prox:fstar}
  \IF {$\| \mA \mPsi \vz_t - \vy \| \leq \epsilon$}
    \RETURN $\overline{\vz} = \vz_{t+1}$
  \ENDIF
  \ENDFOR
 \RETURN $\overline{\vz} = \vz_{T}$
 \end{algorithmic}
\end{algorithm}

\begin{remark}
The proposed primal-dual algorithm does not require user to store a large dictionary matrix $\mPsi$ as well as tomography matrix $\mA$. Hence, the algorithm can utilize the functional forms of the dictionary as well as tomography operator since it only requires the forward and the adjoint operation with the operator. % Moreover, the value of $\gamma$ can be determined using the power iteration on $A\Psi$ that can handle the functional form of the matrix.  
\end{remark}

\subsection{Proximal operators}
The conjugate function of $f(\vx) = \norm{ \vx - \vy } $ is
\begin{align*}
    f^\star(\vw) &= - \inf_{\vx} \big\lbrace f(\vx) -  \vx^T \vw \big\rbrace \\
    &= \begin{cases}
    \vw^T \vy & \quad \mbox{if} \quad \| \vw \| \leq 1 \\
    + \infty & \quad \mbox{otherwise}
    \end{cases}
\end{align*}
We refer to \cite[Example~3.26]{boyd2004convex} for the derivation of the conjugate function. The conjugate function is linear inside the Euclidean norm ball of size 1 and $\infty$ outside. Hence, the conjugate function is convex. Its proximal operator is given in the following theorem. % However, to compute the proximal operator of $f^\star$, we use the Moreau's identity \citep[Proposition~23.18]{bauschke2011convex}:
% \begin{align*}
%     \prox_{\gamma f^\star}(\mathbf{x}) = x - \gamma \, \prox_{\gamma^{-1}f}\left( x/\gamma \right)
% \end{align*}
\begin{theorem} 
The proximal operator of function 
\[
    f^\star(\vx) = \begin{cases}
    \vy^T \vx & \quad \| \vx \| \leq 1\\
    + \infty & \quad \mbox{otherwise}
    \end{cases},
\]
where $\vy \in \R^n$ is a known vector, is given by
\begin{equation}
    \prox_{\gamma f^\star}(\vz) = \frac{\vz - \gamma \vy}{\max \left(1, \| \vz - \gamma \vy \| \right)}.
    \label{eq:Prox:fstar}
\end{equation}
\end{theorem}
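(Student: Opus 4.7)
The plan is to reduce the proximal-operator computation to an orthogonal projection onto the Euclidean unit ball via completion of the square. By definition,
\[
\prox_{\gamma f^\star}(\vz) \;=\; \argmin_{\vx \in \R^n} \Big\{ \gamma f^\star(\vx) + \tfrac{1}{2}\|\vx - \vz\|^2 \Big\}.
\]
Because $f^\star(\vx) = +\infty$ whenever $\|\vx\| > 1$, any minimiser must lie in the closed unit ball $B = \{\vx : \|\vx\| \leq 1\}$; otherwise the objective is $+\infty$. On $B$ the conjugate reduces to the linear function $\vx \mapsto \vy^T \vx$, so the problem becomes
\[
\prox_{\gamma f^\star}(\vz) \;=\; \argmin_{\vx \in B} \Big\{ \gamma \vy^T \vx + \tfrac{1}{2}\|\vx - \vz\|^2 \Big\}.
\]

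Next, I would absorb the linear term into the quadratic by completing the square. A direct expansion gives
\[
\gamma \vy^T \vx + \tfrac{1}{2}\|\vx - \vz\|^2 \;=\; \tfrac{1}{2}\|\vx - (\vz - \gamma \vy)\|^2 \;+\; C,
\]
where $C$ collects terms independent of $\vx$. The minimisation over $B$ therefore amounts to finding the point in the closed unit Euclidean ball nearest to $\vz - \gamma \vy$, i.e.\ the orthogonal projection $\Pi_B(\vz - \gamma \vy)$.

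Finally, I would invoke the standard closed form for the projection onto the Euclidean unit ball: for any $\vw \in \R^n$,
\[
\Pi_B(\vw) \;=\; \frac{\vw}{\max(1, \|\vw\|)},
\]
which coincides with $\vw$ inside the ball and rescales $\vw$ to unit length otherwise. Applying this with $\vw = \vz - \gamma \vy$ yields the claimed formula~\eqref{eq:Prox:fstar}. I do not anticipate a real obstacle here; the whole argument is a two-line reduction (completion of the square followed by a known projection), so the main care is simply to justify the restriction to $B$ at the outset and to cite the ball-projection identity cleanly.
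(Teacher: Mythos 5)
Your proposal is correct and takes essentially the same route as the paper's proof: both complete the square to reduce the problem to the Euclidean projection of $\vz - \gamma \vy$ onto the closed unit ball. The only cosmetic difference is that the paper spells out the two cases of that projection (inside the ball versus rescaling to the sphere) rather than citing the closed-form identity $\Pi_B(\vw) = \vw / \max(1, \|\vw\|)$ directly.
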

\begin{proof}
    The proximal operator for function $f^\star$ reads
    \begin{align*}
        \prox_{\gamma f^\star}\! \! (\vz) &= \underset{\vx}{\argmin} \left\lbrace \frac{1}{2 \gamma} \| \vx - \vz \|^2 + \vx^T \vy : \| \vx \| \leq 1 \right\rbrace, \\
        &= \underset{\| \vx \| \leq 1}{\argmin} \bigg\lbrace \| \vx - \left( \vz - \gamma \vy \right) \|^2 - \gamma^2 \| \vy \|^2 + 2 \gamma \vz^T \vy \bigg\rbrace .
    \end{align*}
    The Euclidean norm constraints enforces two cases: (\emph{i}) The optimal point without the constraints is $\overline{\vx} = \vz - \gamma \vy$. This optimal solution holds when $\| \overline{\vx} \| \leq 1$. (\emph{ii}) When $\| \vz - \gamma \vy \| > 1$, the optimal solution lies on the surface of the Euclidean norm ball with size 1. Moreover, the optimal solution is in the direction of $\vz - \gamma \vy$. Hence, the proximal point is
    \[
        \prox_{\gamma f^\star}(\vz) = \begin{cases}
        \vz - \gamma \vy & \quad \| \vz - \gamma \vy \| \leq 1 \\
        \frac{\vz - \gamma \vy}{\| \vz - \gamma \vy \| } & \quad \mbox{otherwise}
        \end{cases}.
    \]
    This concludes the proof.
\end{proof}
% Hence, the proximal operator for $f^\star$ is simple. Its computation, however, depends on the inversion process of the matrix $\gamma I - A^T A$. In general, it is better to perform a Cholesky decomposition of this matrix once and use these factors to efficiently compute the proximal operator. 
We use the following theorem to compute the proximal of $g(\vz) = \delta_\mathcal{C}(\vz)$, adapted from \cite[Theorem~6.27]{beck2017first}.

\begin{theorem}[projection onto the intersection of a hyperplane and a box]
\label{thm:prox}
Let $\mathcal{C} = \{ \vx \in \R^n \, | \, \vx^T \vone = K, \vzero \leq \vx \leq \vone \}$ be a set. The proximal operator of an indicator function to the set, $\delta_C$, is given by
\begin{equation}
    \prox_{\gamma \delta_\mathcal{C}} (\vx) = \mathcal{P}_{[\vzero,\vone]} \left( \vx - \overline{\mu} \vone \right)
    \label{eq:OrthProj:Formula}
\end{equation}
where $\mathcal{P}_{[\vzero,\vone]}$ is a projection onto the box $[0,1]^n$ and $\overline{\mu}$ is a solution of the equation $\vone^T \mathcal{P}_{[\vzero,\vone]} \left( \vx - \mu \vone \right) = K$.
\end{theorem}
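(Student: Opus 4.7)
The plan is to recognize that since $\delta_{\mathcal{C}}$ is the indicator of a closed convex set, its proximal operator is simply the Euclidean projection onto $\mathcal{C}$, so we must solve
\begin{equation*}
    \prox_{\gamma \delta_\mathcal{C}}(\vx) = \argmin_{\vz} \left\{ \tfrac{1}{2} \| \vz - \vx \|^2 : \vone^T \vz = K, \; \vzero \leq \vz \leq \vone \right\}.
\end{equation*}
I would attack this via Lagrangian duality, dualizing only the linear equality constraint $\vone^T \vz = K$ and keeping the box constraint explicit. This yields the Lagrangian $L(\vz,\mu) = \tfrac{1}{2}\|\vz - \vx\|^2 + \mu(\vone^T \vz - K)$ with $\vz \in [\vzero,\vone]$.

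Next I would complete the square to rewrite $L(\vz,\mu) = \tfrac{1}{2}\| \vz - (\vx - \mu \vone)\|^2 + c(\mu,\vx)$, where $c$ does not depend on $\vz$. The inner minimization then decouples componentwise into a Euclidean projection onto the box, giving $\vz(\mu) = \mathcal{P}_{[\vzero,\vone]}(\vx - \mu \vone)$. Strong duality holds because $\mathcal{C}$ is a polyhedron and the objective is convex quadratic (Slater-type conditions are automatic for linear constraints), so the optimal primal $\overline{\vz}$ equals $\vz(\overline{\mu})$ for any dual optimum $\overline{\mu}$. Primal feasibility of $\vz(\overline{\mu})$ then forces the one-dimensional equation $\vone^T \mathcal{P}_{[\vzero,\vone]}(\vx - \overline{\mu} \vone) = K$, which matches the statement.

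The main obstacle I anticipate is showing that such an $\overline{\mu}$ actually exists. For this I would study the scalar map $\varphi(\mu) = \vone^T \mathcal{P}_{[\vzero,\vone]}(\vx - \mu \vone) = \sum_{i=1}^n \min\{1, \max\{0, x_i - \mu\}\}$, and observe that it is continuous and non-increasing in $\mu$, with $\varphi(\mu) \to n$ as $\mu \to -\infty$ and $\varphi(\mu) \to 0$ as $\mu \to +\infty$. Assuming the problem is feasible, i.e.\ $K \in [0,n]$, the intermediate value theorem produces at least one $\overline{\mu}$ with $\varphi(\overline{\mu}) = K$; any such dual optimum yields the same primal point $\overline{\vz}$ by strict convexity of the original objective, so the formula in~\eqref{eq:OrthProj:Formula} is well defined. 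A short comment on how to compute $\overline{\mu}$ numerically (e.g.\ by bisection on the monotone piecewise-linear $\varphi$) would round out the argument.
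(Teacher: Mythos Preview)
Your proposal is correct and follows essentially the same route as the paper: form the Lagrangian by dualizing only the equality constraint, minimize over the box to obtain $\vz(\mu)=\mathcal{P}_{[\vzero,\vone]}(\vx-\mu\vone)$, invoke strong duality, and impose primal feasibility to get the scalar equation for $\overline{\mu}$. Your additional remarks on existence of $\overline{\mu}$ via the monotone continuous map $\varphi$ and its numerical computation are not part of the paper's proof proper but appear in a remark immediately following it, so you are slightly more thorough but not methodologically different.
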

\begin{proof}
    The orthogonal projection of $\vx$ is the unique solution of 
    \begin{equation}
        \underset{\vz \in \R^n}{\min} \left\lbrace \tfrac{1}{2} \| \vz - \vx \|^2 \, : \, \vone^T \vz = K, \, \vzero \leq \vz \leq \vone \right\rbrace.
        \label{eq:OrthProj}
    \end{equation}
    A Lagrangian of this minimization problem reads 
    \begin{align*}
        \mathcal{L}(\vz,\mu) &= \tfrac{1}{2} \norm{ \vz - \vx }^2 + \mu \left( \vone^T \vz - K\right),
    \end{align*}
    % &= \tfrac{1}{2} \norm{ \mathbf{z} - \left( \mathbf{x} - \mu \mathbf{1} \right) }^2 - \tfrac{\mu^2}{2} \norm{ \mathbf{1} }^2 + \mu \left( \mathbf{1}^T \mathbf{z} - K \right),
    where $\mu \in \R$ is a Lagrange multiplier. It follows from the strong duality that $\overline{\vz}$ is an optimal solution of problem~\eqref{eq:OrthProj} if and only if there exists a dual variable $\mu \in \R$ for which
    \begin{align}
        \vy^\star & \in \argmin_{\vzero \leq \vz \leq \vone} \quad \mathcal{L}(\vz, \overline{\mu}), \label{eq:OrthProj:relation1} \\
        \vone^T & \vz = K.
        \label{eq:OrthProj:relation2}
    \end{align}
    Using the expression for the Lagrangian, the relation~\eqref{eq:OrthProj:relation1} can be equivalently written as $ \overline{\vz} = \mathcal{P}_{[\vzero,\vone]} \left( \vx - \overline{\mu} \vone \right)$. 
    The feasibility condition~\eqref{eq:OrthProj:relation2} takes a form $\vone^T \mathcal{P}_{[\vzero,\vone]} \left( \vx - \overline{\mu} \vone \right) = K$.
\end{proof}

\begin{remark}
    The projection onto the box $[\mathbf{0},\mathbf{1}]$ is simple. It is done component-wise as $\big(\min \{ \max \{ x_{i} , 0\},1\} \big)_{i=1}^{n}$. However, equation~\eqref{eq:OrthProj:Formula} consists of finding a root of the non-increasing~function $\phi(\mu) = \sum\nolimits_{i=1}^{n} \min \{ \max \{ x_{i} - \mu, 0\},1\} - K.$
    Since $\mu \mapsto \min \{ \max \{x_i - \mu , 0 \}, 1 \}$ is a non-increasing function for any i, $\phi$ is a non-increasing function. Its root can be found using the Newton procedure, where derivative is
    \begin{align*}
        \phi'(\mu) = | \mathcal{I} |, \quad
        \mbox{with} \; \mathcal{I} = \left\lbrace i \in \{1, \dots, n\}: 0 \leq x_i - \mu \leq 1 \right\rbrace.
    \end{align*} \vspace{-4mm}
\end{remark}

%%
%% Should there be a reference to Algorithm 1 here?
%%

\begin{algorithm}[t]
    \caption{Image formation from shape coefficients}
    \label{alg:imForm}
    \begin{algorithmic}[1]
    \renewcommand{\algorithmicrequire}{\textbf{Input:}}
    \renewcommand{\algorithmicensure}{\textbf{Output:}}
    \REQUIRE $\vz \in \R^p, \mA \in \R^{m \times n}, \mPsi \in \R^{n \times p}, \vy \in \R^{m}, K$ 
    \ENSURE  $\overline{\vx}$
 \\ \textit{Initialisation} : $\overline{\vx} = \vzero, i=0, s=0$
 \STATE sort the elements of $\vz$ in the descending order, and store its indices as a list $T$
  \WHILE {$\left( s \leq K\right)$ or $\left( i \leq p\right)$}
  \STATE $\hat{\vx} = \overline{\vx} + \mPsi \ve_{T(i)}$
  \IF {$ \| \mA \hat{\vx}  - \vy \| \leq \| \mA \overline{\vx} - \vy \| $ and $ \overline{\vx}^T \mPsi \ve_{T(i)}  \leq 0 $ } \label{alg:imForm:step:overlap}
  \STATE $\overline{\vx} = \hat{\vx}, \quad s = s+1$
  \ENDIF
  \STATE $i = i + 1$
  \ENDWHILE
 \RETURN $\overline{\vx}$
 \end{algorithmic} 
\end{algorithm} 

\vspace{-4mm}
\subsection{Image Formation}\label{sec:Opt:ImForm}
The convex program CoShaRP does not always lead to a binary solution (refer to Fig.~\ref{fig:geometry}). Moreover, if the optimization procedure is terminated early, we may not have a binary solution. Hence, an accurate image formation process is essential to retrieve the target image from the non-binary shape coefficient vector resulting from CoShaRP. We propose the image formation procedure based on sorting of the coefficients. We first sort the coefficients in descending order, and selectively form the image consistent with the measurements. Algorithm~\ref{alg:imForm} enumerates the steps in the image formation process. Here, $\ve_{j}$ is a natural basis vector with non-zero element located at $j^{\text{th}}$ position. To make sure the shapes do not overlap, we also add necessary conditions (see step~\ref{alg:imForm:step:overlap} in Algorithm~\ref{alg:imForm}).
%%
%% There should be a reference to Algorithm 2 here and more edatils about it.
%%

%%%%%%%%% Results %%%%%%%%%
\section{Numerical Experiments}
\label{sec:Results}
In this section, we try to answer questions regarding resolution, sparsity, rotations and the performance under noise using 2D numerical experiments. For all the experiments, we have images of size $1~m \times 1~m$ discretized on $128 \times 128$ pixels, and the tomography matrix has at least $1024$ measurements. The typical tomography setup is shown in Fig.~\ref{fig:setup}. In the CoShaRP performance plots, we generate $100$ different realizations of the target image with given constraints (for examples, size, rotations and repetitions of the shape), and the success rate is measured from the average over all instances. We say an instance is successful if the recovered image is close to the target image (in Euclidean norm).

For all the experiments, we run Algorithm~\ref{alg:split} with $\gamma = 1.2 \sigma$, $\tau = 0.8 \sigma$ with $\sigma = \| \mA \mPsi \|^{-1}$. Moreover, we set $T = 4 p^2$ and $\epsilon = 10^{-6}$. Once we obtain the vector $\overline{\mathbf{z}}$, we run Algorithm~\ref{alg:imForm} to form the image. 

\begin{figure}[t]
    \centering
    \includegraphics[height=0.2\textheight]{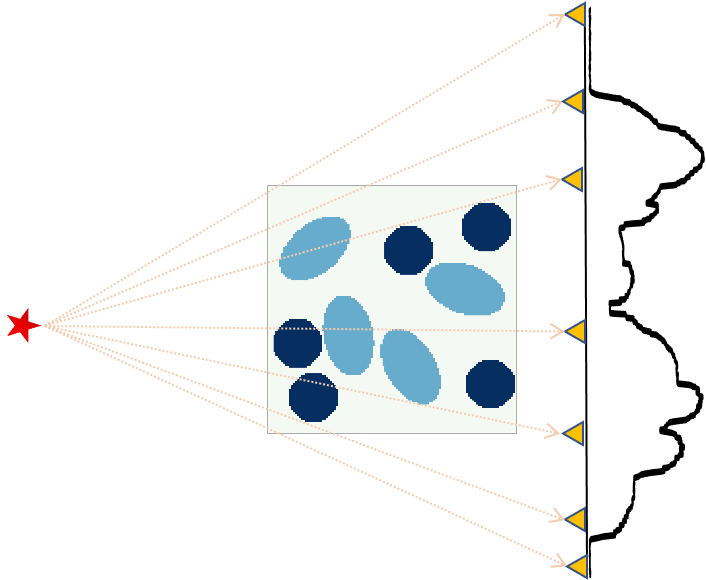} 
    \caption{Single-shot tomography setup. The source (denoted by $\star$) is located at the left of the target image. The detector array consisting of 1024 detectors is located at the right of the image. For reference, we only show 7 detectors (denoted by $\bigtriangledown$). Measurement profile is shown on the right of the detector array.} 
    \label{fig:setup}
\end{figure}

%These measurements are noise-free except for measurement noise case (section~\ref{sec:Results:noise}).

\subsection{Resolution Analysis} 
In this experiment, we estimate the required minimum width of the shape sensed by a single-shot. For simplicity, we consider circular disc of constant intensity with size varying from $1$ to $3168$ pixels. Fig.~\ref{fig:results}(a) shows the performance of CoShaRP against varying sizes of the disc.  As the number of measurements (\ie, detector pixels) is increased, the success rate increases implying that the recovery of even single-pixel shapes is possible with CoShaRP.

\subsection{Invariance with respect to Density and Rotation}
We first look at the success of CoShaRP with multiple repetitions of the shape. We consider a circular disc with size $256$ pixels and the number of repetitions in the image from $1$ to $20$. The top figure in Fig.~\ref{fig:results}(b) shows the performance of CoShaRP with density. This shows that the CoShaRP is insensitive to the number of repetitions of the shapes in the image. Next, we take an ellipsoidal disc with semi-axes $0.2~m$ and $0.08~m$. This image is rotated for $30$ angles making sure that each angle represents a different shape on the $128 \times 128$ pixels. The bottom figure in Fig.~\ref{fig:results}(b) demonstrates the performance of CoShaRP with the number of possible rotations. This implies that the CoShaRP is insensitive to the number of possible rotations of the shapes. 

\begin{figure*}[!htb]
    \centering
    \begin{tabular}{C{.22\textwidth} | C{.22\textwidth}  | C{.25\textwidth}  | C{.25 \textwidth}}
        \toprule
        (a) {\bf Resolution} & (b) {\bf Invariance} & (c) {\bf Shapes} & (d) {\bf Noise} \\ \midrule
        \multirow{2}[1]{=}{ \hspace*{3cm} \\[1ex] \hspace*{1cm} shapes \\ \includegraphics[width=0.2\textwidth]{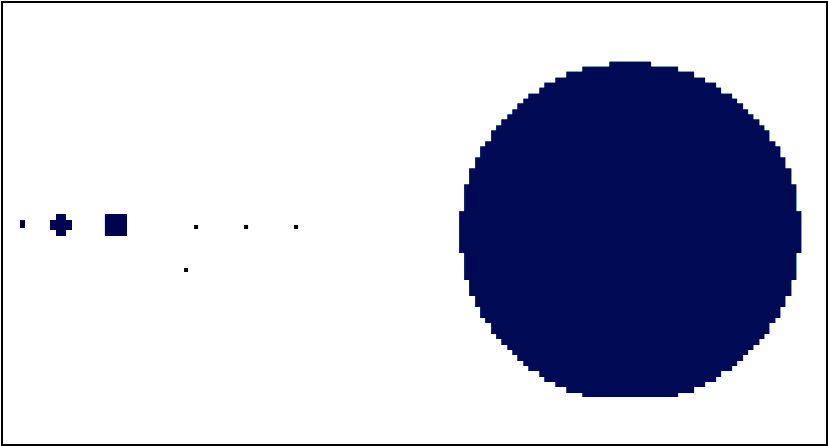} } & \multirow{2}[8]{=}{ \hspace*{1cm} {\bf Density} \\ \hspace*{0.1cm}  \includegraphics[width=0.2\textwidth]{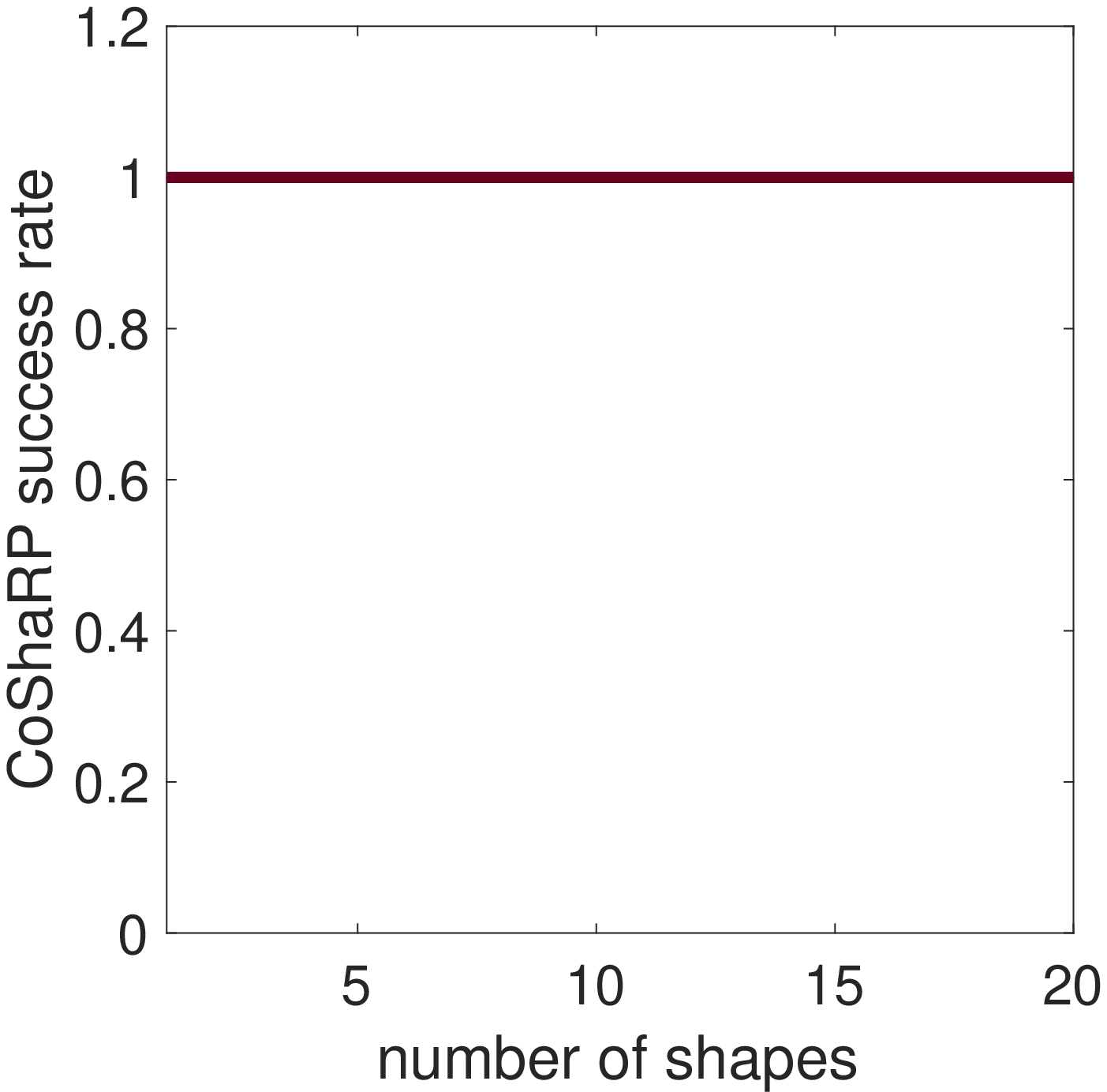}}  & \multirow{2}[8]{=}{ \hspace*{0.2cm} {\bf Non-homogeneous} \\ \hspace*{0.4cm} \includegraphics[width=0.14\textwidth]{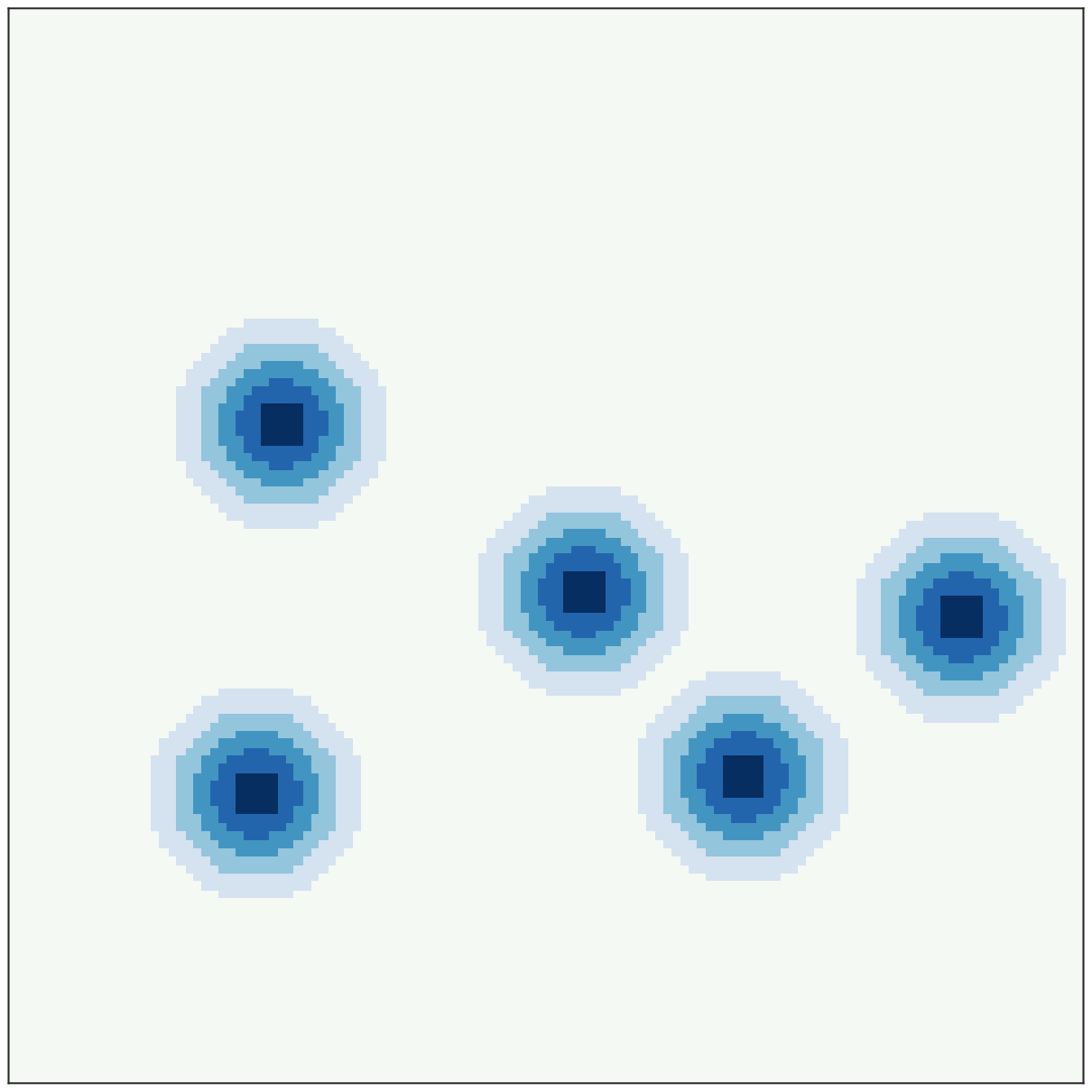} \\ \hspace*{0.4cm} \includegraphics[width=0.14\textwidth]{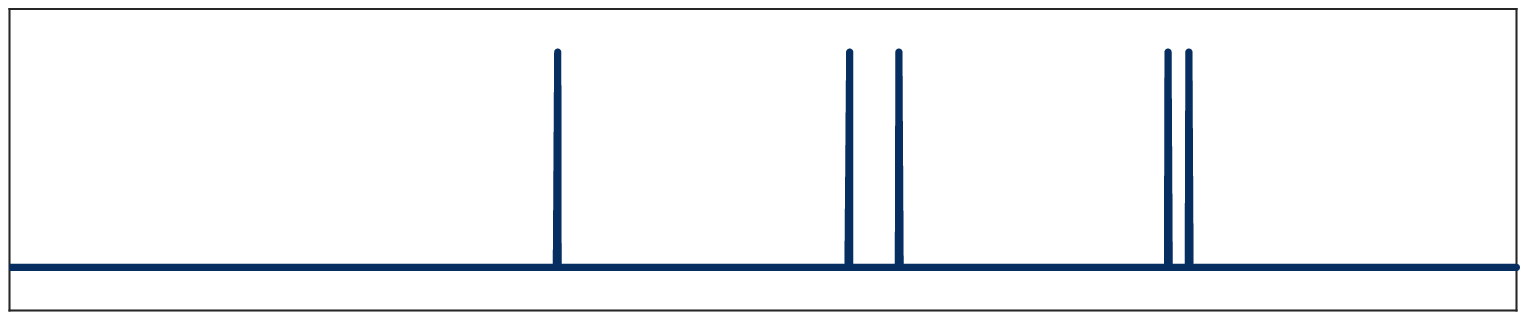}} & $0.1~\%$ SNR  \\
        & & & \includegraphics[width=0.12\textwidth]{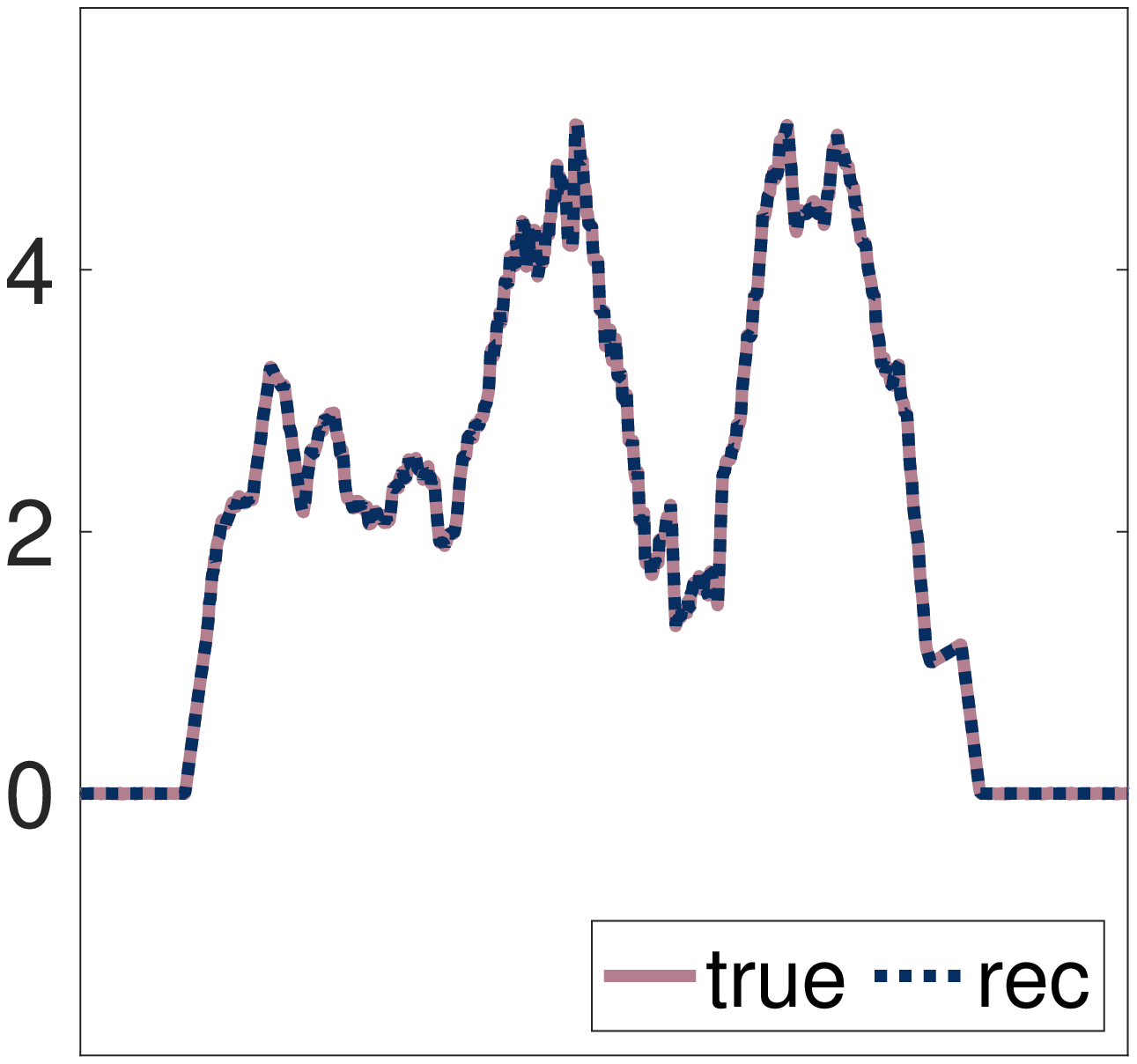} \includegraphics[width=0.12\textwidth]{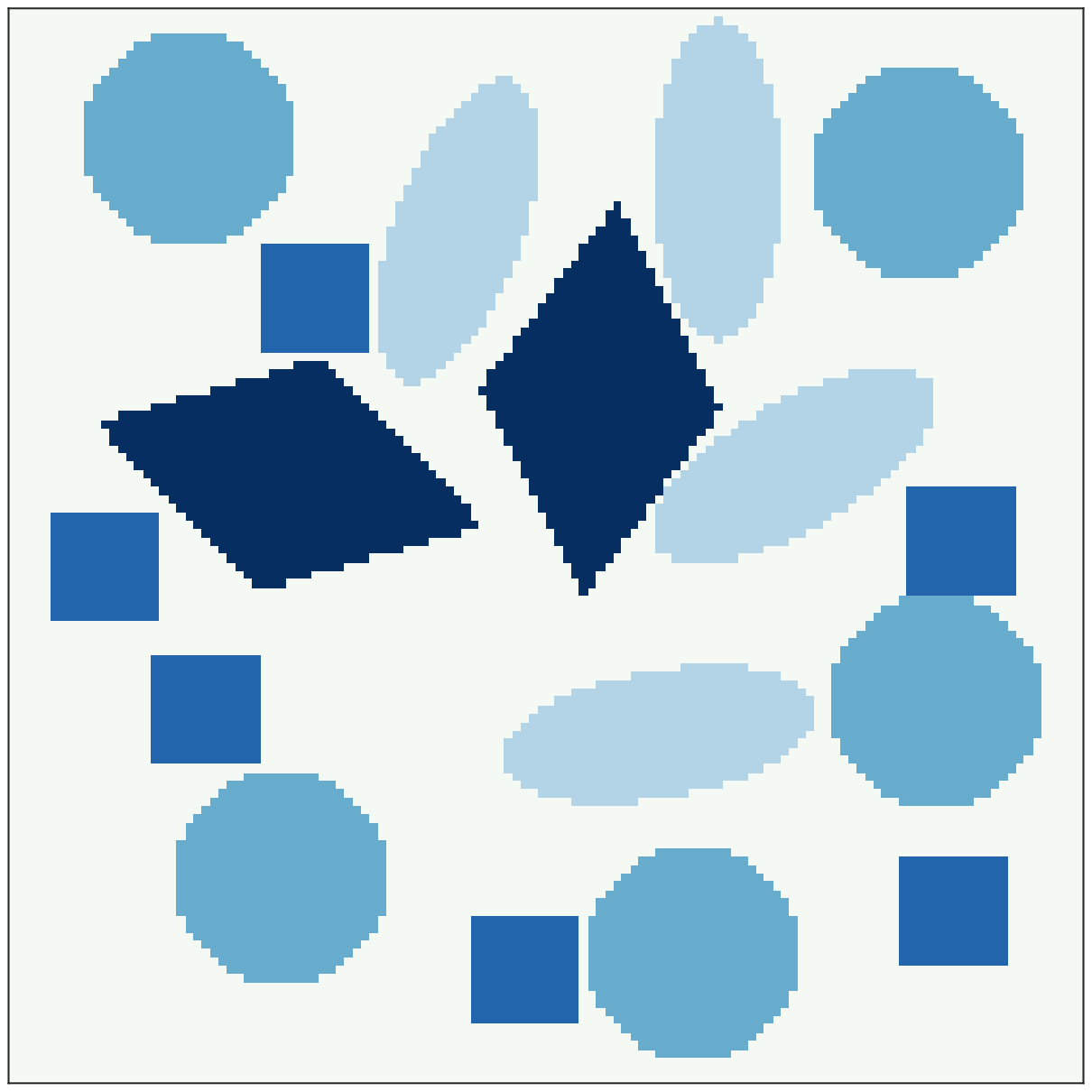} \includegraphics[width=0.12\textwidth]{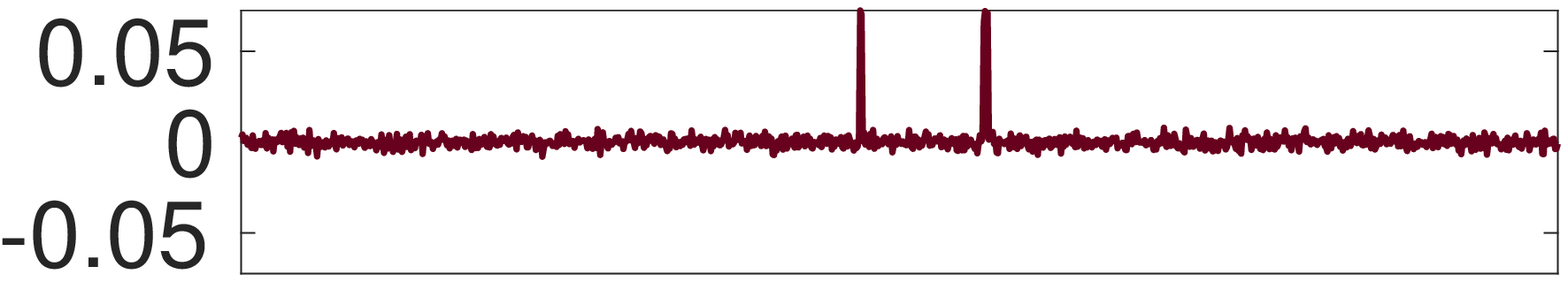} \includegraphics[width=0.12\textwidth]{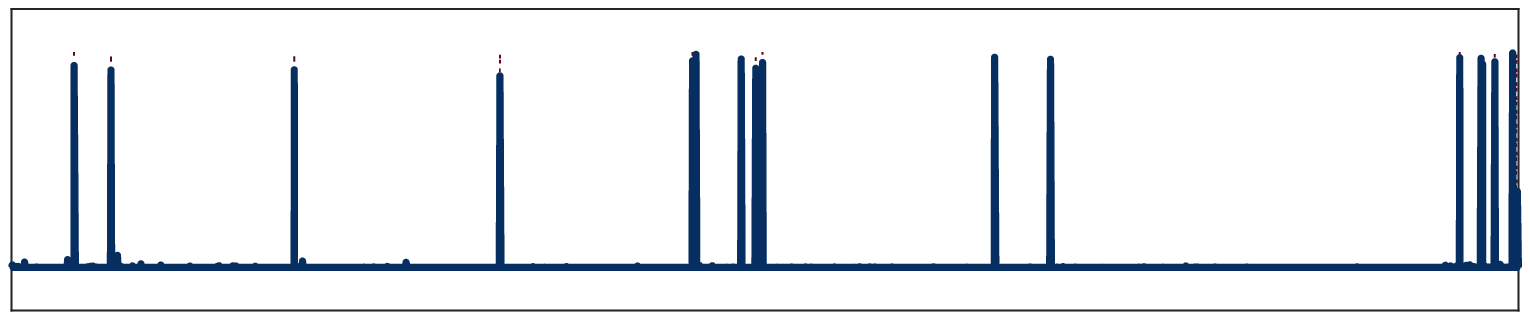} \\
        
         & & & $1~\%$ SNR \\
         \multirow{3}[1]{=}{ \includegraphics[width=0.21\textwidth]{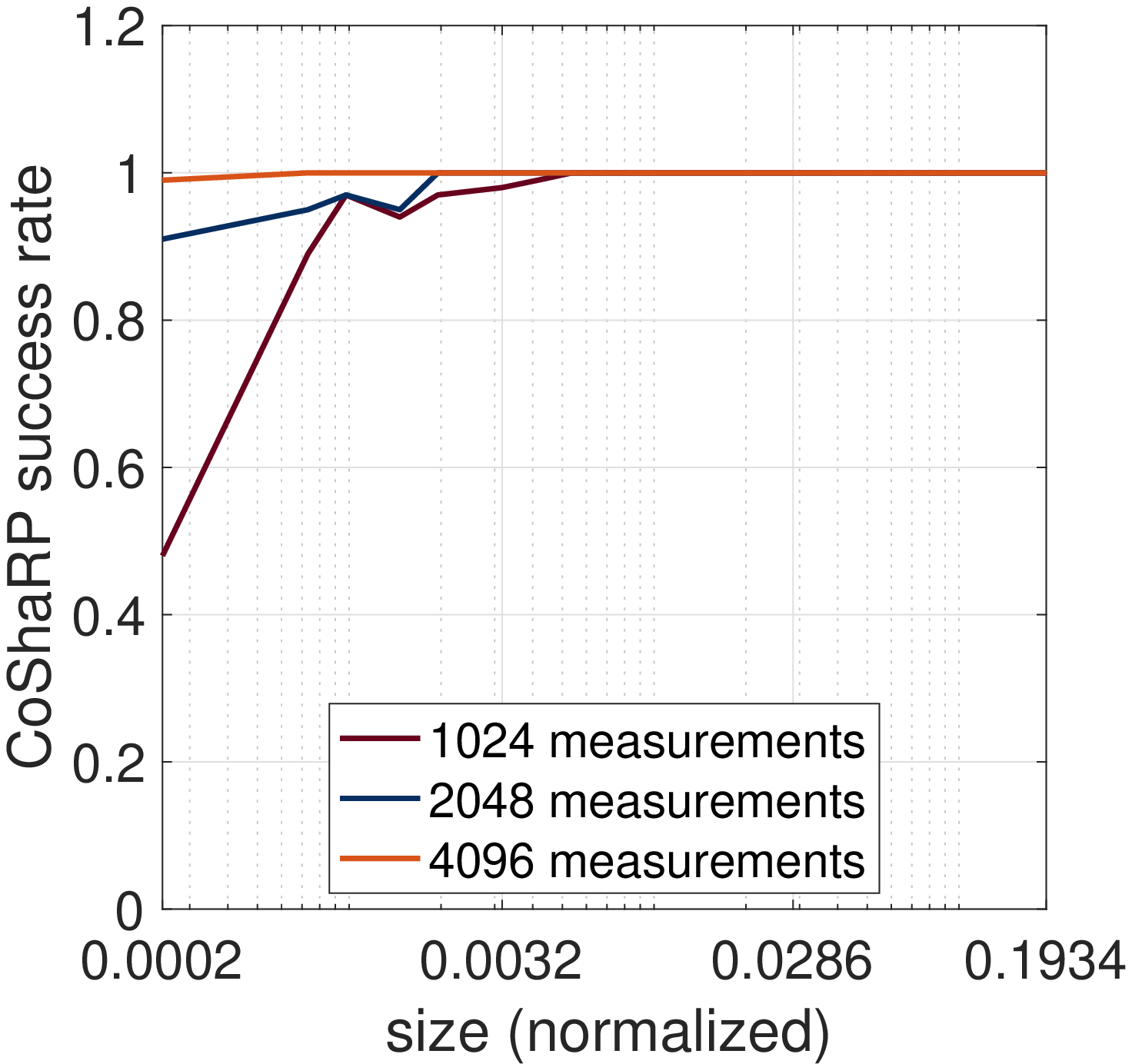} } & \multirow{4}[5]{=}{ \hspace*{1cm} {\bf Rotation} \\ \hspace*{0.1cm}  \includegraphics[width=0.2\textwidth]{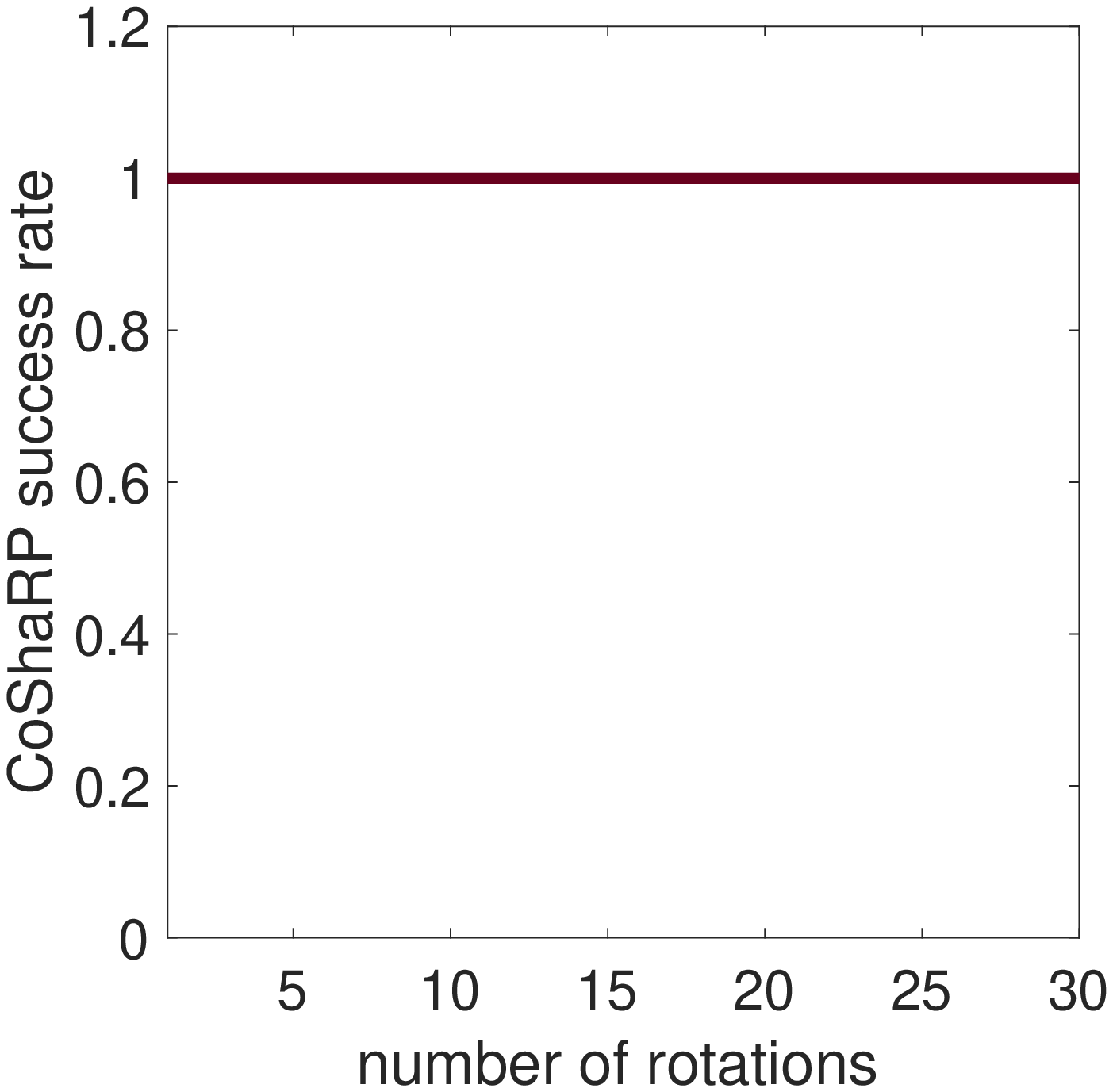}} & \multirow{4}[8]{=}{ \hspace*{0.6cm} {\bf Non-convex} \\ \hspace*{0.4cm} \includegraphics[width=0.14\textwidth]{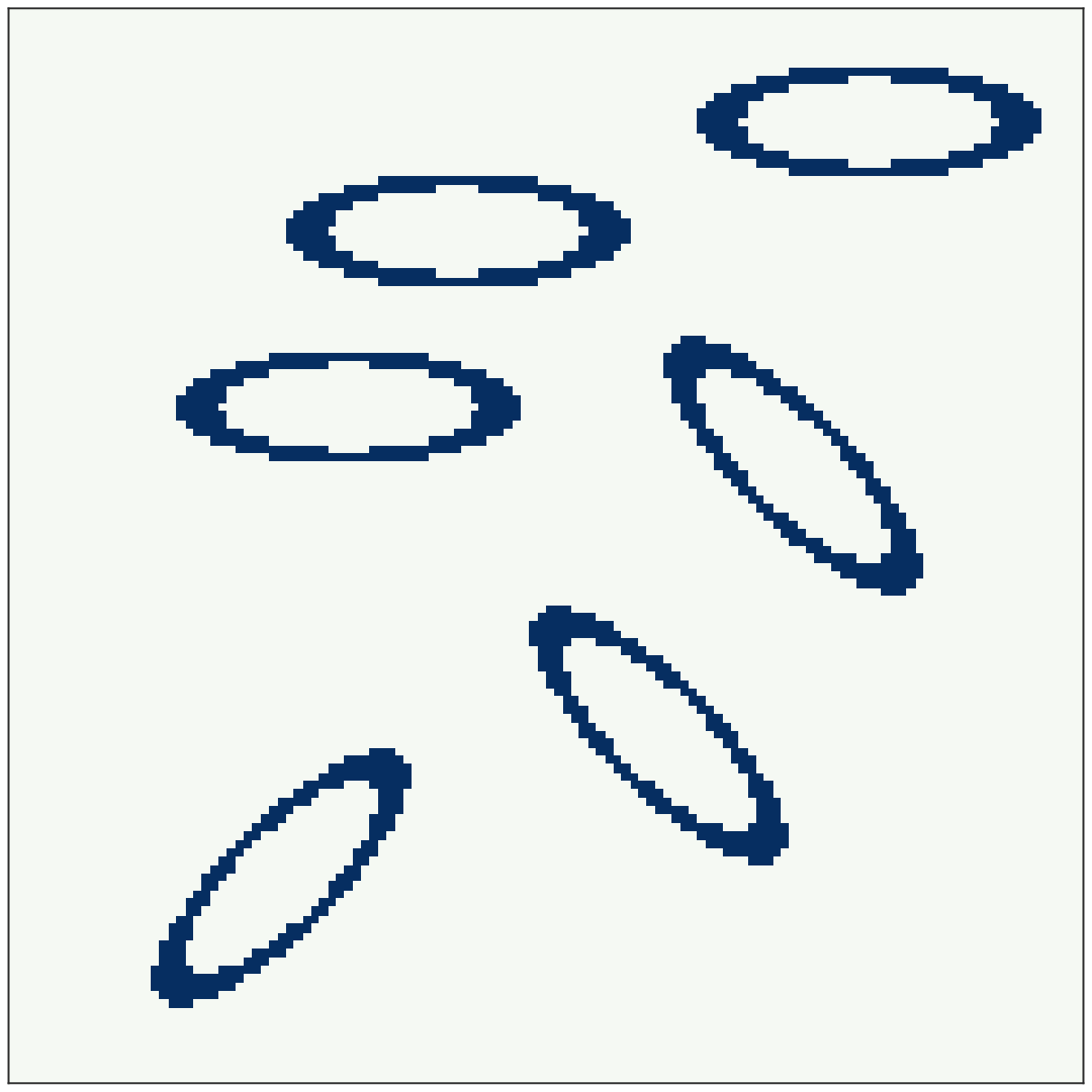} \\ \hspace*{0.4cm} \includegraphics[width=0.14\textwidth]{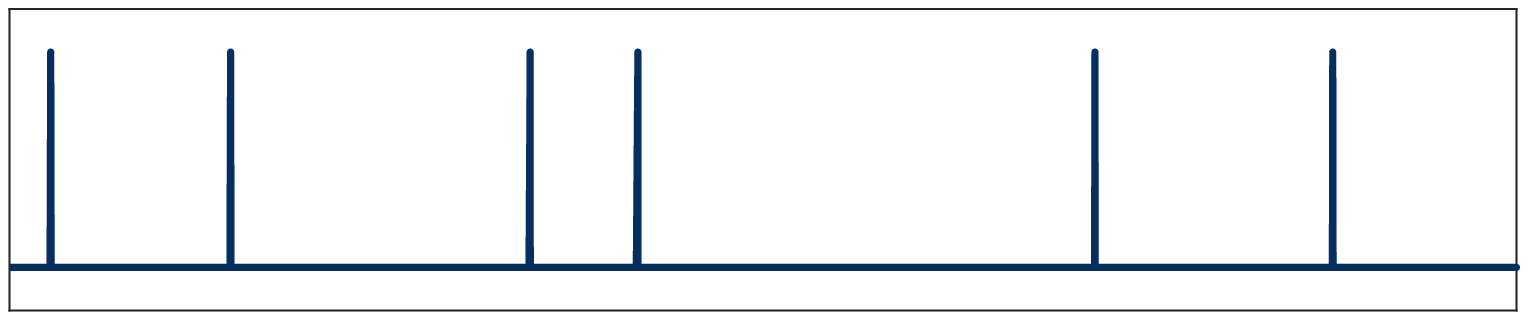}} & \includegraphics[width=0.12\textwidth]{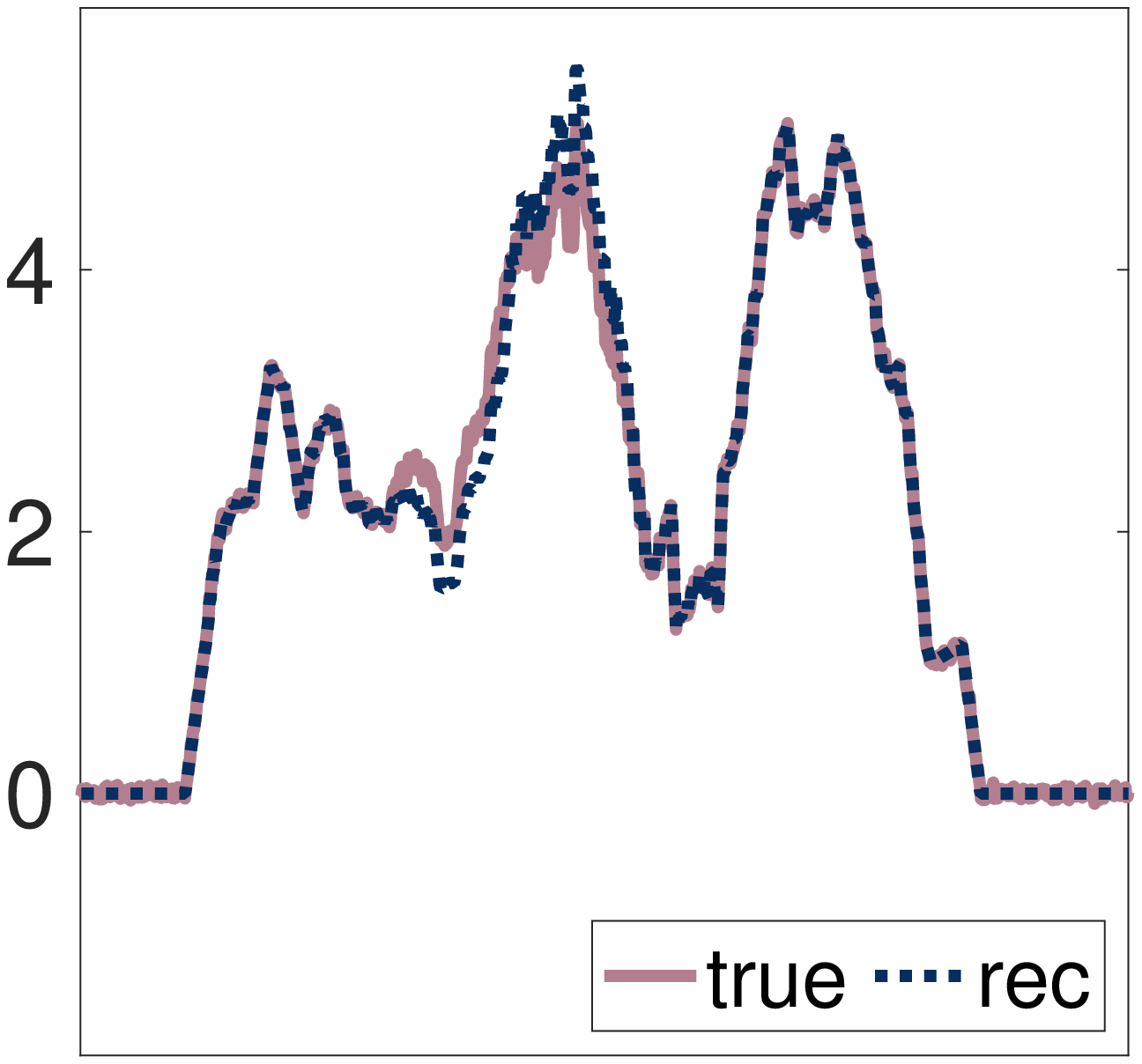} \includegraphics[width=0.12\textwidth]{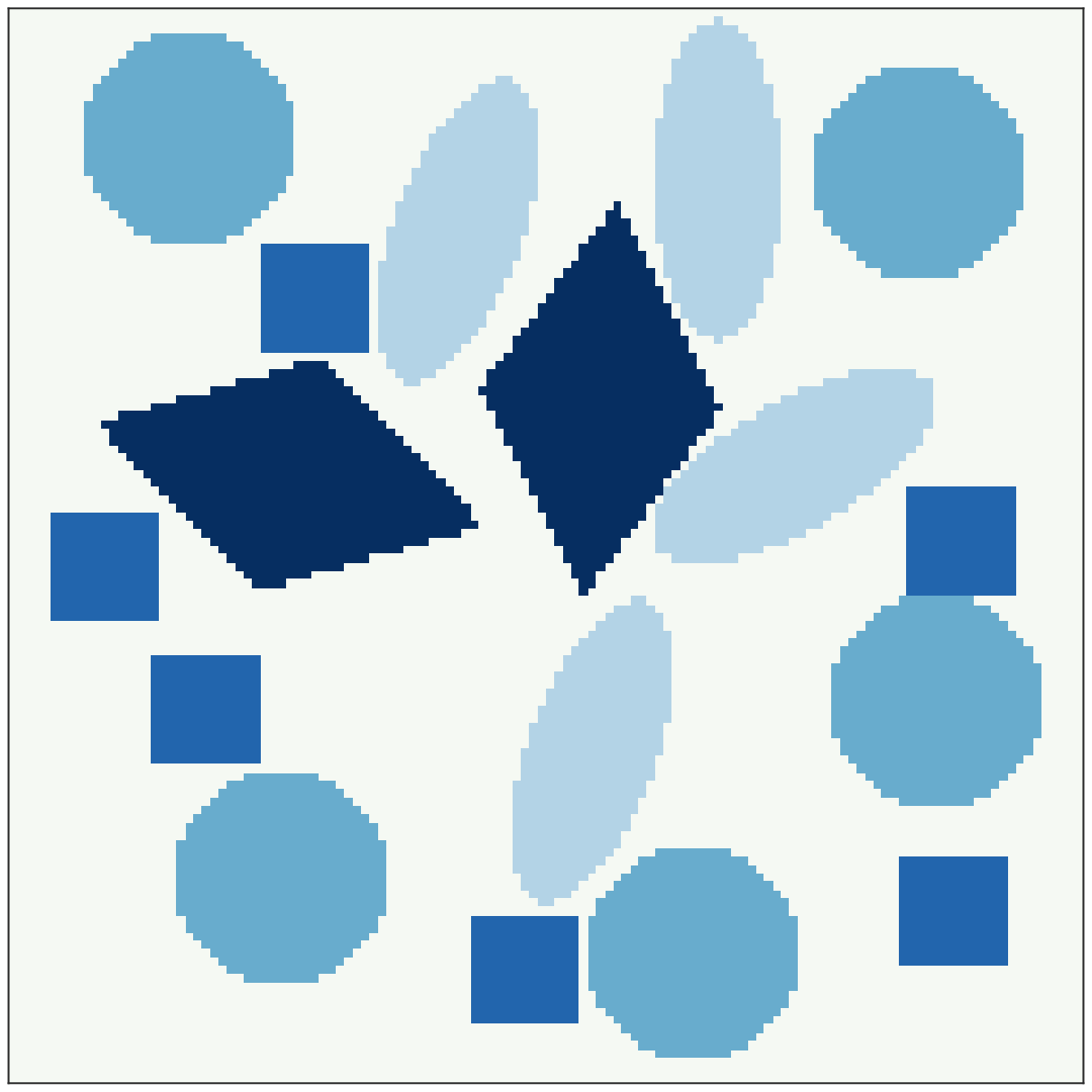}  \includegraphics[width=0.12\textwidth]{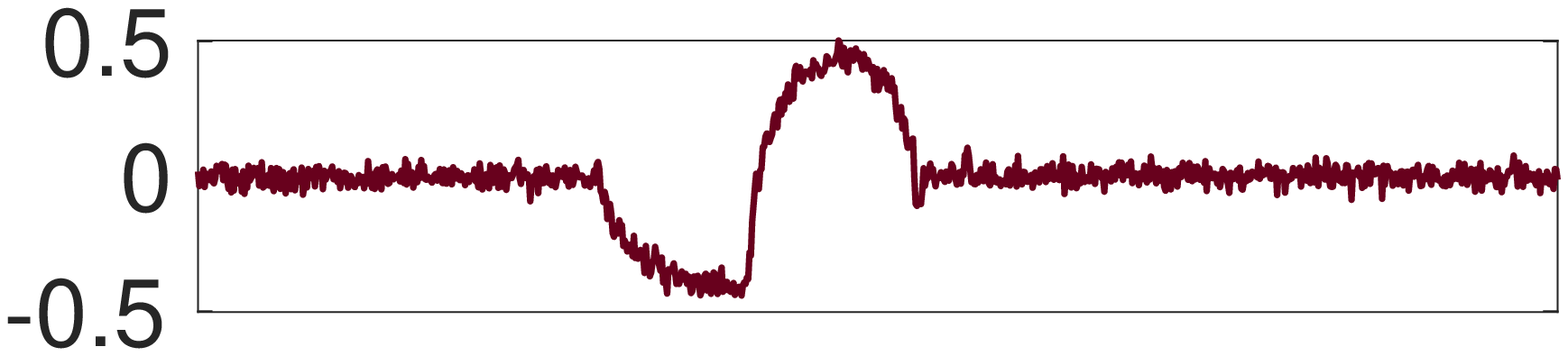}
         \includegraphics[width=0.12\textwidth]{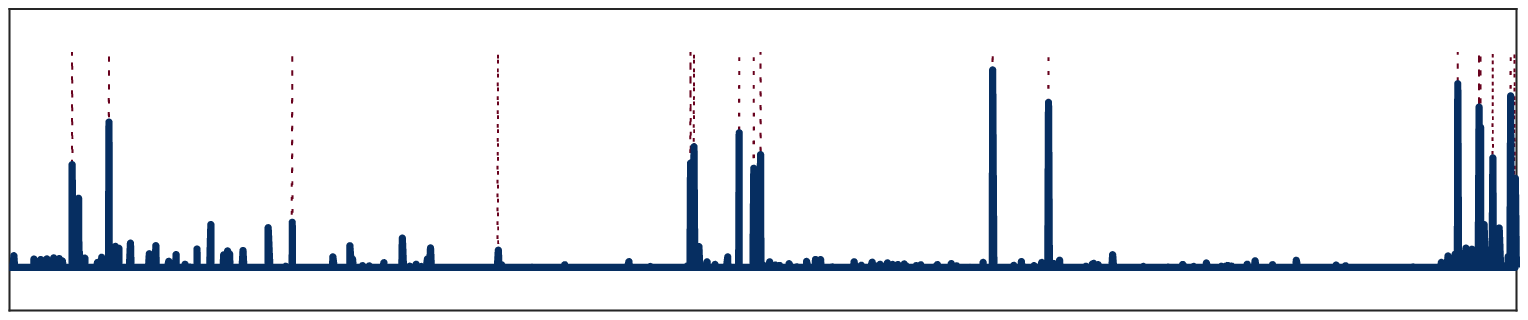} \\
        
         & & & $10~\%$ SNR \\
         & & & \includegraphics[width=0.12\textwidth]{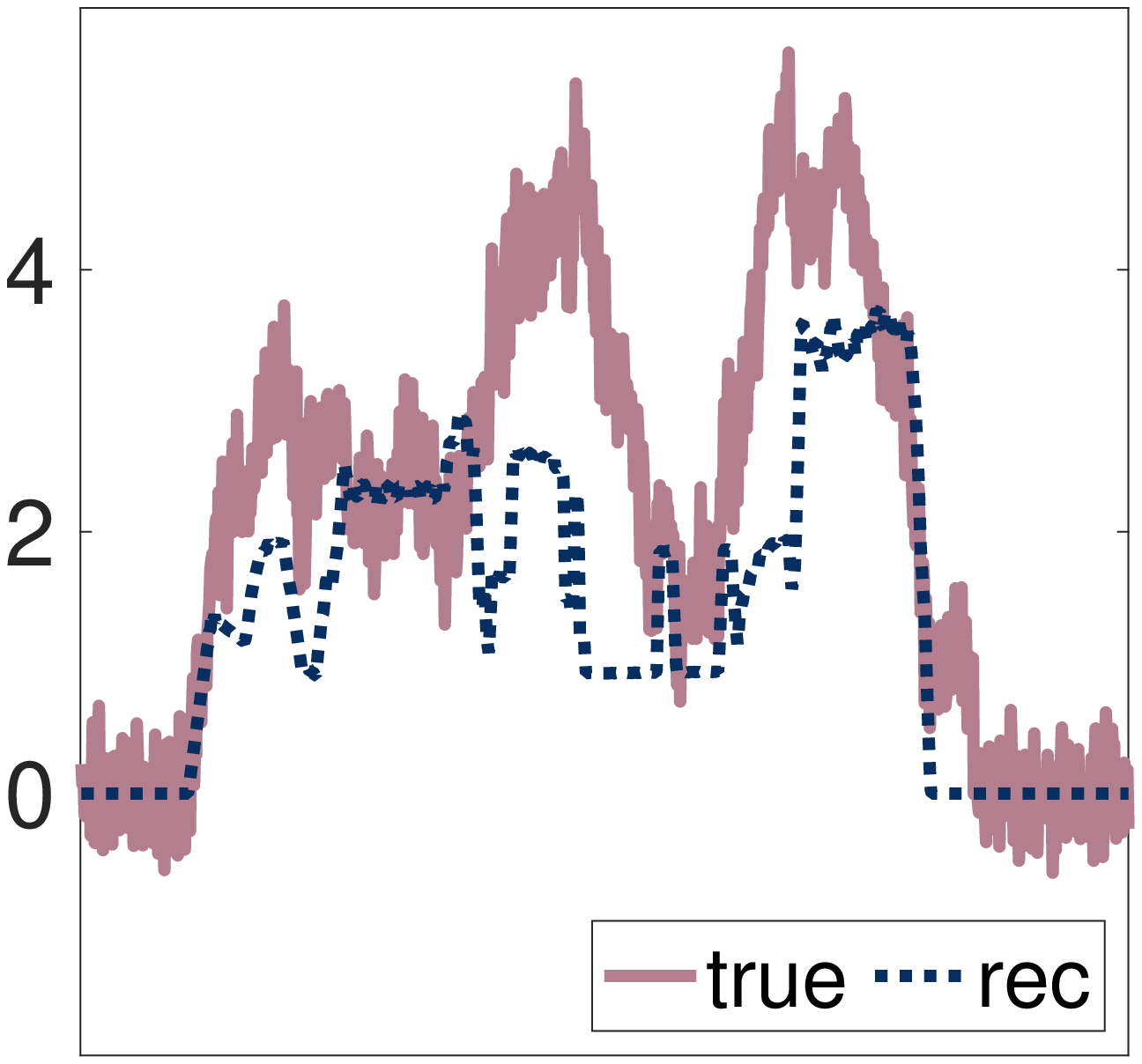} \includegraphics[width=0.12\textwidth]{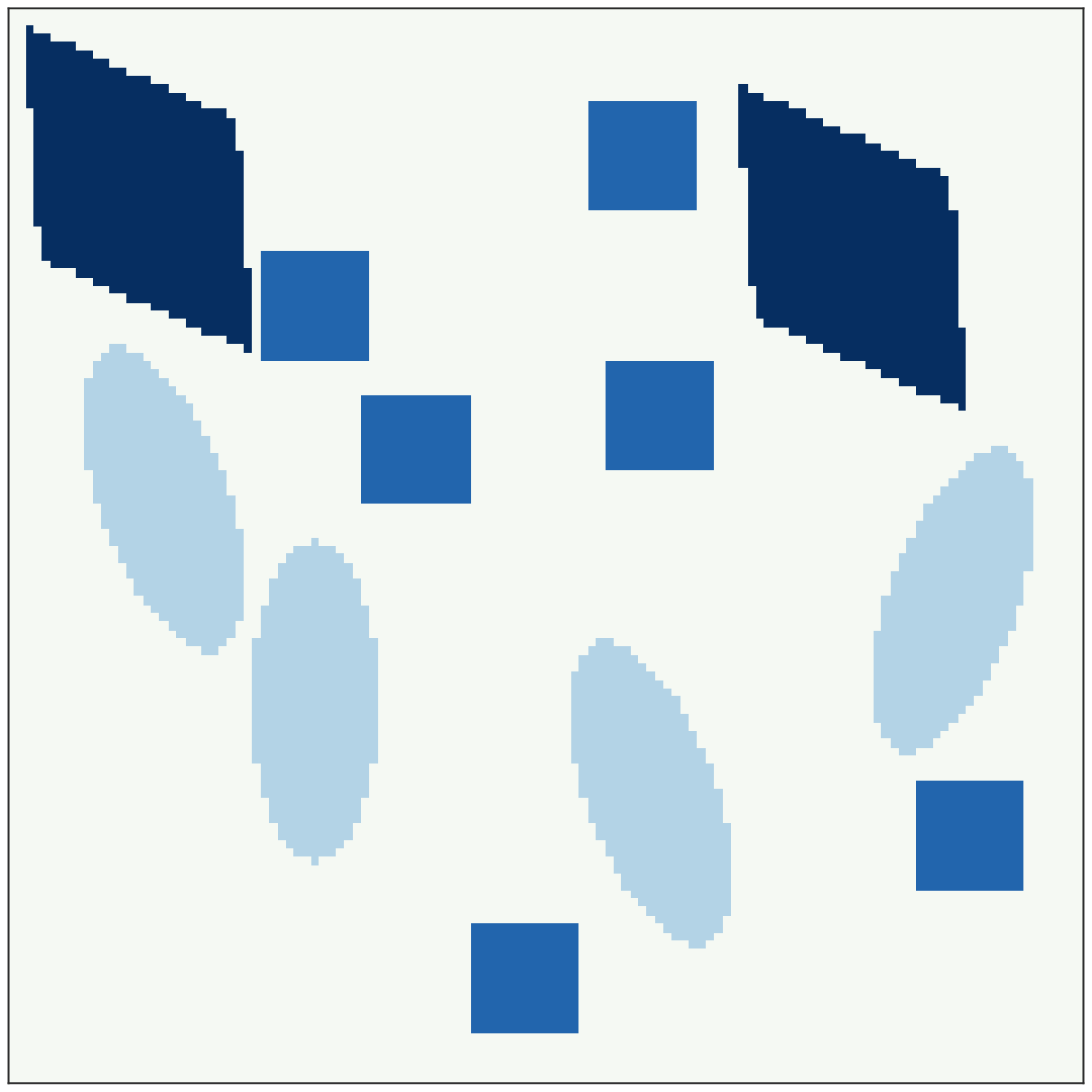}
         \includegraphics[width=0.12\textwidth]{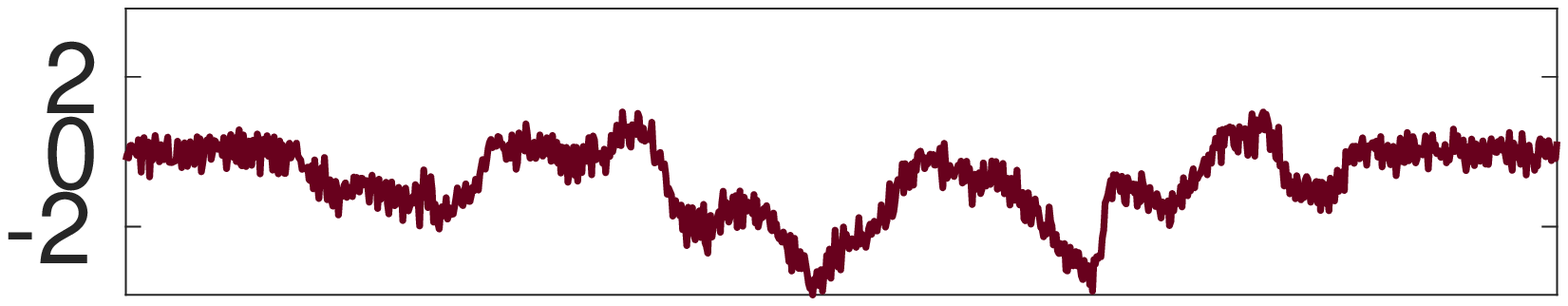}
         \includegraphics[width=0.12\textwidth]{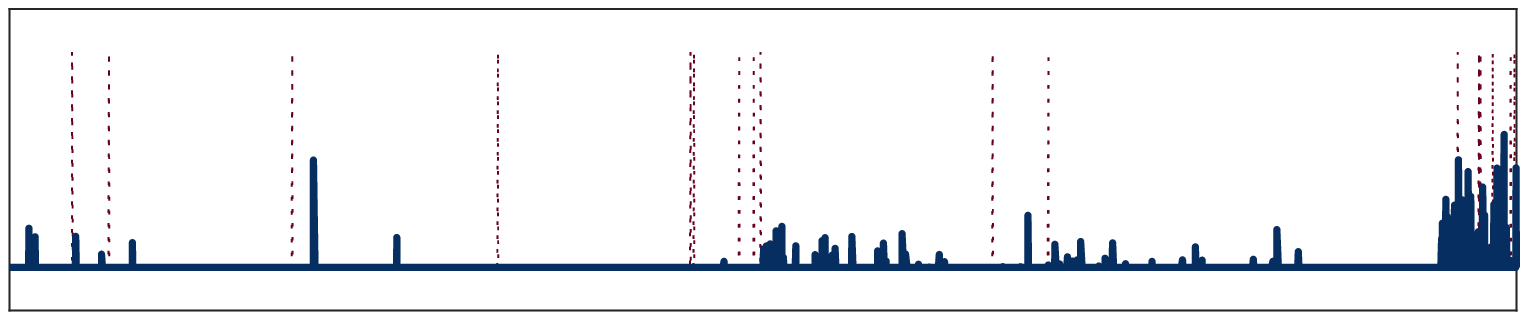} \\
         \bottomrule
    \end{tabular} 
    \caption{Numerical results of CoShaRP.} 
    \label{fig:results}
\end{figure*}

\subsection{Non-homogeneous and Non-convex Shapes}
The top figure in Fig.~\ref{fig:results}(c) provides the reconstruction from CoShaRP for non-homogeneous shape. We consider a circular shape with four different intensities varying radially. The true image consists of 5 repetitions of this shape. The CoShaRP recovers these 5 copies successfully as shown from the shape coefficients (given below the figure). For non-convex shape, we consider ellipsoidal shell with outer axes $0.2~m$ and $0.05~m$, and inner axes $0.15~m$ and $0.03~m$. The bottom figure in Fig.~\ref{fig:results}(c) gives the reconstruction with CoShaRP for a true image with 6 repetitions of the above-mentioned non-convex shape. From these two experiments, we conclude that the CoShaRP can recover the non-homogeneous and non-convex shapes.

\subsection{Measurement Noise}
We consider the true image shown in Fig.~\ref{fig:intro} and consider three noisy scenarios where the Gaussian noise of strength $0.1\%$, $1\%$ and $10\%$ is added to 1024 measurements. In Fig.~\ref{fig:results}(d), we plot the measurements (on the left) and the reconstructed image (on the right) for the above-mentioned noise values. In the measurements plots, the true noisy data is mentioned as `\textit{true}', while the forward projected data from the reconstructed image is denoted by `\textit{rec}'. The plots below them show the difference between the two. From Fig.~\ref{fig:results}(d), it is evident that the CoShaRP is stable till $1\%$ noise, while fails for extremely noisy measurements. 

%%%%%%%%% Discussions %%%%%%%%%

\section{Conclusions}\label{sec:Conclusion}

We introduced a \textit{single-shot tomographic shape sensing} problem that aims to recover shapes from a single cone-beam projection. To solve this problem, we develop a convex program CoShaRP. CoShaRP is novel in the sense that the simplex-type constraint enables sharp recovery results from extremely under-determined single-shot tomographic projections. Moreover, we propose a primal-dual algorithm to find an approximately optimal solution to CoShaRP quickly. The numerical results demonstrate that 
\begin{enumerate}[topsep=5pt,itemsep=0ex,partopsep=1ex,parsep=1ex]
	\item  the resolution limit to sense the shape depends on the number of measurements,
	\item CoShaRP is insensitive to the number of repetitions of the shape and the number of possible rotations of the shape, 
	\item CoShaRP can sense the non-homogeneous as well as non-convex shapes,
	\item CoShaRP tolerates only a moderate amount of measurement noise. 
\end{enumerate}

\noindent The limitations of CoShaRP are as follows: (\emph{i}) The roto-translations of the shapes must be included in the dictionary for the exact recovery of the target image. This inclusion requirement makes CoShaRP a computationally expensive approach due to the large dictionary size.
(\emph{ii}) CoShaRP also requires the correct knowledge of shapes and their intensity. If the shape is not known accurately, the CoShaRP may fail.
(\emph{iii}) CoShaRP relies on the knowledge of total number of shapes in the target image. If unknown, its estimation may be a costly procedure due to repeated solving of CoShaRP for various estimates.

\section*{Acknowledgments}
The authors thank Nick Luiken for stimulating discussions. This work was supported by the Dutch Research Council (grant no. OCENW.XS.039).

\end{document}